\newtheorem{prop}{Proposition}
\title{Effective Probabilistic Time Series Forecasting with Fourier Adaptive Noise-Separated Diffusion}
\author{%
  Xinyan Wang\textsuperscript{1,2}\thanks{Work done when Xinyan Wang was an intern at AMAP, Alibaba Group.},
  Rui Dai\textsuperscript{2}\thanks{Corresponding author.},
  Kaikui Liu\textsuperscript{2},
  Xiangxiang Chu\textsuperscript{2} \\
  \textsuperscript{1}AMSS, Chinese Academy of Sciences, Beijing, China \\
  \textsuperscript{2} AMAP, Alibaba Group, Beijing, China \\
  \textsuperscript{1}\texttt{wangxinyan@amss.ac.cn},
  \textsuperscript{2}\texttt{\{daima.dr,damon, chuxiangxiang.cxx\}@alibaba-inc.com,}
}
\begin{document}

\maketitle

\begin{abstract}
We propose the \textbf{F}ourier \textbf{A}daptive \textbf{L}ite \textbf{D}iffusion \textbf{A}rchitecture (\textbf{FALDA}), a novel probabilistic framework for time series forecasting. First, we introduce the Diffusion Model for Residual Regression (DMRR) framework, which unifies diffusion-based probabilistic regression methods. Within this framework, FALDA leverages Fourier-based decomposition to incorporate a component-specific architecture, enabling tailored modeling of individual temporal components.
A conditional diffusion model is utilized to estimate the future noise term, while our proposed lightweight denoiser, DEMA (Decomposition MLP with AdaLN), conditions on the historical noise term to enhance denoising performance.
Through mathematical analysis and empirical validation, we demonstrate that FALDA effectively reduces epistemic uncertainty, allowing probabilistic learning to primarily focus on aleatoric uncertainty.
Experiments on six real-world benchmarks demonstrate that FALDA consistently outperforms existing probabilistic forecasting approaches across most datasets for long-term time series forecasting while achieving enhanced computational efficiency without compromising accuracy. Notably, FALDA also achieves superior overall performance compared to state-of-the-art (SOTA) point forecasting approaches, with improvements of up to 9\%. 
\end{abstract}

\section{Introduction}\label{sec:intro}
Time series forecasting (TSF) is a fundamental challenge in practical applications, playing a crucial role in decision-making systems across multiple domains, including finance \cite{2020Multivariatefinance}, healthcare \cite{2023Medical}, and transportation \cite{lv2014traffic}\cite{dai2020hybrid}. Recent developments in deep learning have yielded various effective approaches for TSF, with deterministic models such as Autoformer \cite{wu2021autoformer}, DLinear \cite{2023DLinear}, and iTransformer \cite{liu2024itransformer}, demonstrating notable performance. These models process historical time series data to generate future predictions, exhibiting strong capabilities in point forecasting tasks.

Diffusion models have demonstrated significant success across various generative tasks, including image generation~\cite{esser2024scaling,rombach2022high,peebles2023scalable,chu2024visionllama,liuflow,lan2025flux,ramesh2021zero,flux2024,chu2025usp} and video generation~\cite{zhang2024trip,2025motionpro,zheng2024open,bar2024lumiere,hu2024animate,blattmann2023stable,yang2024cogvideox,lin2024open}. Recently, their application has extended to probabilistic forecasting for long-term time series prediction~\cite{2024mgtsd,tashiro2021csdi,2024mrdiff}, where most approaches focus on reconstructing complete temporal patterns encompassing seasonal variations, trend components and noise patterns. While these models exhibit strong performance in probabilistic forecasting, their point forecasting accuracy generally trails that of deterministic estimation methods.
This limitation stems from the inherent incompatibility between the progressive noise injection mechanism in diffusion models and time-series data characteristics. The gradual noise addition process tends to disrupt the temporal structure, making it particularly difficult to recover meaningful patterns from pure noise. This challenge becomes especially pronounced when handling non-stationary time series, as their statistical properties (e.g., mean, variance, autocorrelation, etc.) evolve over time \cite{2024series_to_series, 24diffusionts, 2022NSformer, 2024fan}.

Recent studies have explored hybrid approaches combining point estimation with diffusion models. TMDM \cite{2024TMDM} incorporates predictions from point estimation models into both forward and backward diffusion processes to enhance future forecasting. 
D$^3$U \cite{2025D3U} attempts to decouple deterministic and uncertainty learning by leveraging embedded representations from point estimation to guide the diffusion model in capturing residual patterns, thereby avoiding the need to reconstruct complete temporal components through diffusion. 
Although demonstrating superior point estimation capability compared to previous diffusion-based approaches \cite{tashiro2021csdi, rasul2021timegrad}, these approaches (1) do not specifically address temporal dynamics such as non-stationary patterns, and (2) do not adequately separate epistemic uncertainty from aleatoric uncertainty. 
This limitation prevents the diffusion model from focusing on uncertainty learning, which hinders further improvements in point estimation performance, particularly when applied to strong backbone models.

In this paper, we first analyze the decoupling mechanisms for deterministic and uncertain components in existing approaches \cite{2024TMDM,2025D3U,2020ddpm,2022card}, and introduce a unified generalized diffusion learning framework called DMRR (Diffusion Model for Residual Regression). Building on DMRR, we develop FALDA, a novel diffusion-based time series forecasting framework that employs Fourier decomposition to decouple time series into three distinct components: non-stationary trends, stationary patterns, and noise patterns. Through tailored modeling of each component, FALDA effectively separates epistemic uncertainty and aleatoric uncertainty \cite{gawlikowski2023aleatoric}, allowing the probabilistic modeling component to focus exclusively on aleatoric uncertainty. A lightweight denoiser DEMA is designed to handle multi-scale residuals. 
As a non-autoregressive diffusion model, FALDA avoids the common issue of error accumulation and demonstrates superior performance in long-range prediction tasks. Unlike conventional approaches that predict diffusion noise \cite{2024TMDM, tashiro2021csdi}, our denoiser directly constructs the target series, thereby reducing the learning complexity for temporal patterns \cite{shen2023timediff}. By integrating DDIM \cite{2021ddim} and DEMA, FALDA achieves both training and sampling efficiency. As illustrated in Figure \ref{fig:hexagonal_performance_falda}, FALDA outperforms existing methods in both point estimation and probabilistic forecasting.

In summary, our main contributions are:
\begin{itemize}
        \item We introduce the Diffusion Model for Residual Regression (DMRR) framework to unify recent diffusion approaches for probabilistic regression and provide rigorous mathematical proofs demonstrating the equivalence of their underlying diffusion processes.
        \item We propose the Fourier Adaptive Lite Diffusion Architecture (FALDA), a diffusion-based probabilistic time series forecasting framework that leverages Fourier decomposition to decouple and model different time-series components. We design DEMA (Decomposition MLP with AdaLN), a lightweight denoiser that integrates adaptive layer normalization and trend-seasonality decomposition to handle multi-scale residuals. Combined with DDIM, DEMA improves computational efficiency while maintaining performance.
        \item FALDA supports plug-and-play deployment through a phase-adaptive training schedule, enabling seamless integration (e.g., processing stationary term with SOTA deterministic models). We evaluate our model on six real-world datasets, and the results demonstrate that our model achieves superior overall performance on both accuracy and probabilistic metrics against the state-of-the-art models.
\end{itemize}
\begin{figure}[hbtp]
    \centering
    \includegraphics[width=1.0 \linewidth]{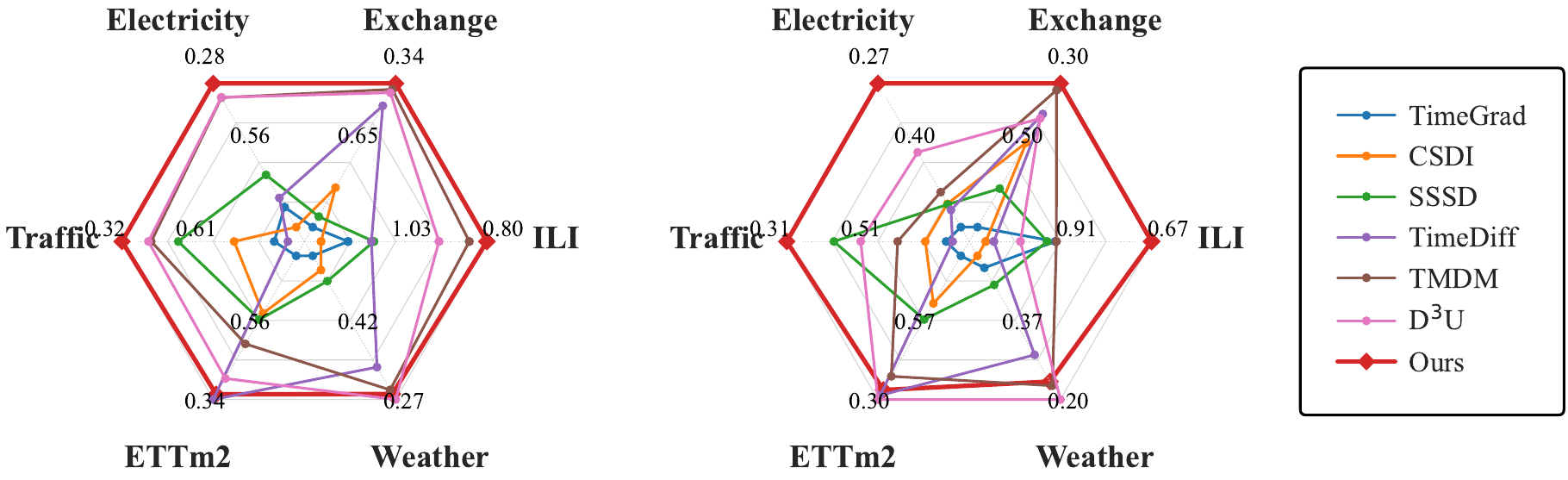}
    \caption{Performance of FALDA in point estimation (MAE, left) and probabilistic prediction (CRPS, right). All three plug-and-play methods (TMDM, D$^3$U, and FALDA) utilize NSformer as the same backbone network for fair comparison.}
    \label{fig:hexagonal_performance_falda}
\end{figure}
\begin{figure}[hbtp]
    \centering
    \includegraphics[width=0.9\linewidth]{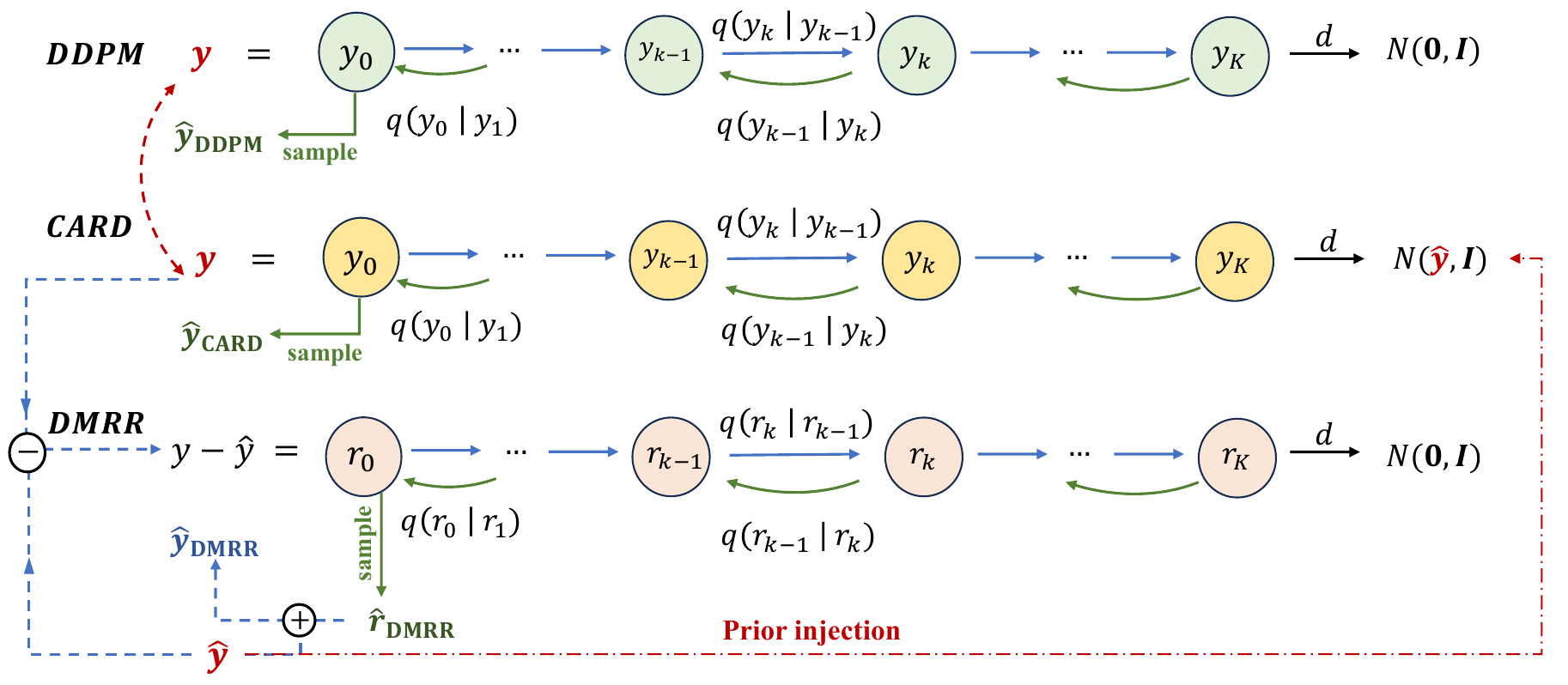}
    \caption{Comparison of three diffusion frameworks: DDPM, CARD, and DMRR, where $\hat{y}_{\text{DDPM}}$, $\hat{y}_{\text{CARD}}$, and $\hat{y}_{\text{DMRR}}$ represent their respective final estimates.
    }
    \label{fig:1-DMRR}
\end{figure}
\section{Diffusion Model for Residual Regression (DMRR)} \label{sec:DMRR}
Diffusion models have gained considerable traction in probabilistic regression tasks, particularly in the domain of probabilistic time series forecasting, where they have emerged as a notably effective paradigm for handling sequential dependencies through their iterative denoising mechanism. While some recent probabilistic regression methods have demonstrated state-of-the-art performances \cite{2022card,2024TMDM,2025D3U}, they inherently conform to a unified framework that refines residual errors through Denoising Diffusion Probabilistic Models (DDPM) \cite{2020ddpm}. In this work, we term this framework ‌Diffusion Model for Residual Regression (DMRR).
This section begins with a formal review of CARD \cite{2022card}, which establishes a generalized framework extending DDPM, where DDPM can be viewed as a special case with zero prior knowledge. Through the lens of the DMRR framework, we subsequently demonstrate that CARD essentially applies standard DDPM to perform residual fitting, establishing a conceptual unification across these seemingly disparate approaches \cite{2024TMDM, 2025D3U}. 

\paragraph{CARD} \label{sec:card}
The Classification and Regression Diffusion (CARD) model \cite{2022card} extends Denoising Diffusion Probabilistic Models (DDPM) by incorporating prior knowledge into both forward and reverse diffusion processes (see Appendix \ref{appendix:ddpm} for DDPM fundamentals). Formally, given a target variable $y_0 \sim q(y)$ with covariate $x$, CARD utilizes prior knowledge $f_\phi(x)$ to guide the generation, where $f_{\phi}$ can be a pretrained network as demonstrated in \cite{2022card}. This yields the following forward diffusion process:
\begin{equation}
    \label{eq:card_forward}
    \begin{aligned}
        y_k &= \sqrt{\alpha_k} y_{k-1} + (1-\sqrt{1-\beta_k})f_{\phi}(x) + \sqrt{\beta_k} z_k, \quad z_k \sim \mathcal{N}(0, 1), \quad \text{(one-step)} \\
        y_k &= \sqrt{\bar{\alpha}_k}y_0 + (1-\sqrt{\bar{\alpha}_k})f_{\phi}(x) + \sqrt{1-\bar{\alpha}}\bar{z}_k, \quad \bar{z}_k \sim \mathcal{N}(0, 1), \quad \text{(multi-step)}
    \end{aligned}
\end{equation}

where $\alpha_k = 1-\beta_k \in (0,1)$ and $\bar{\alpha}_k = \prod_{s=1}^k \alpha_s$ denote the noise schedule parameters for $k=1, 2, \dots, K$. This process converges to a Gaussian limit distribution: $\mathcal{N}(f_{\phi}(x), I).$ The corresponding reverse process posterior distribution is given by:
\begin{equation} \label{eq:card-posterior}
    \begin{gathered}
        q(y_{k-1}| y_k, y_0) = \mathcal{N}(y_{k-1}; \tilde{m}_k, \tilde{\beta}_k I), \text{where} \\
        \tilde{m}_k = \frac{\beta_k \sqrt{\bar{\alpha}_{k-1}}}{1-\bar{\alpha}_k}y_0 + \frac{(1-\bar{\alpha}_{k-1})\sqrt{\alpha_k}}{1-\bar{\alpha}_k}y_k + (1 + \frac{(\sqrt{\bar{\alpha}_k} -1)(\sqrt{\alpha_k}+ \sqrt{\bar{\alpha}_{k-1}})}{1-\bar{\alpha}_k})f_{\phi}(x), \\
        \tilde{\beta}_k = \frac{1-\bar{\alpha}_{k-1}}{1-\bar{\alpha}_k}\beta_k.
    \end{gathered}
\end{equation}

The residual $l_k = y_k - f_{\phi}(x)$ exhibits the same convergence behavior as DDPM, with a standard Gaussian distribution as its limit distribution. This equivalence underpins our DMRR framework, which systematically formalizes this residual learning paradigm within a unified diffusion framework. 

\paragraph{The unified framework}
As illustrated in Figure \ref{fig:1-DMRR}, our proposed DMRR framework introduces a residual learning paradigm that decouples prior knowledge from the limit distribution in CARD diffusion process. Given the target $y$, the framework first generates a preliminary estimate $\hat{y}$ (for CARD $\hat{y}=f_{\phi}(x)$). Unlike CARD, which learns the full data distribution $y$ guided by $\hat{y}$, DMRR focuses on learning the residual distribution $q(r)$, where $r = y - \hat{y}$. This is implemented through a DDPM process, where the forward diffusion follows the Markov chain $\{r_0=r, r_1, \dots, r_k, \dots\}$ with $r_k$ denoting the noise sample at step $k$. The reverse process generates residual predictions: $\hat{r}_{\text{DMRR}}$ via the denoising network. The final output, which can be considered as a refinement of the preliminary estimate $\hat{y}$, combines both components:
\begin{equation}
    \hat{y}_{\text{DMRR}} = \hat{y} + \hat{r}_{\text{DMRR}}.
\end{equation}
Mathematically, we prove that $l_k = y_k - \hat{y}$ in CARD and $r_t$ in DMRR possess identical conditional and posterior distributions (see Appendix \ref{appendix:mathematical derivations} for rigorous proofs). And it should be noted that when the preliminary estimate $\hat{y}=0$, CARD and DMRR degenerate to the standard DDPM. 

In Section \ref{sec:difference with existing work}, we comprehensively discuss the diffusion framework underlying state-of-the-art time series forecasting models. We further analyze how different framework designs affect the performance of time series prediction tasks and illustrate the advantages of DMRR framework in time series forecasting tasks.
\begin{figure}[hbtp]
    \centering
    \includegraphics[width=1.0\linewidth]{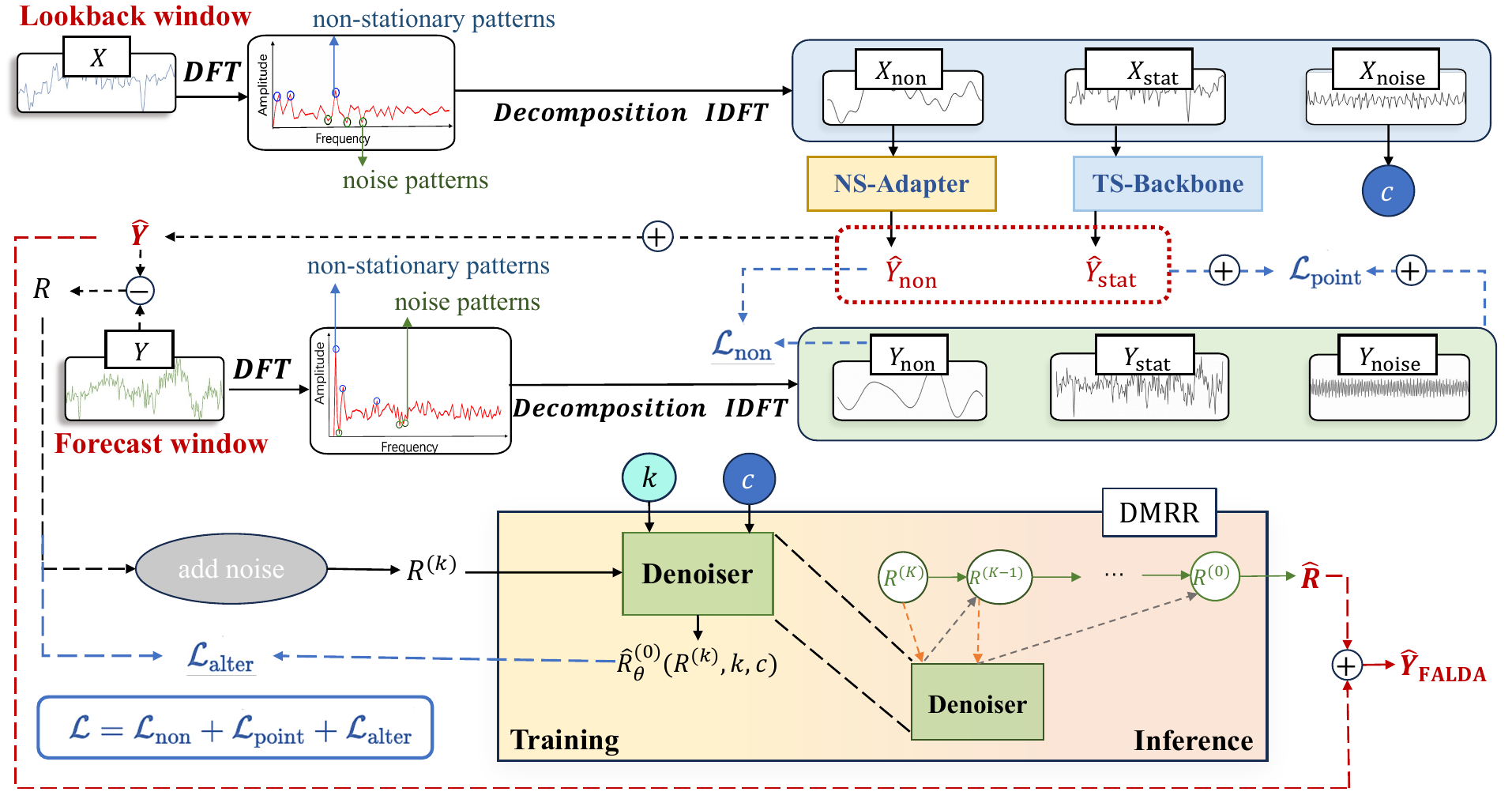}
    \caption{An illustration of the proposed FALDA framework. By leveraging Fourier decomposition, NS-Adapter and TS-Backbone generate the preliminary estimation, $\hat{Y}$. The prediction residual $R=Y-\hat{Y}$ is then input into the denoiser for subsequent probabilistic learning and refinement of the preliminary estimation.}
    \label{fig:2-FALDA}
\end{figure}

\section{Fourier Adaptive Lite Diffusion Architecture (FALDA)} \label{sec:falda}
From a methodological perspective, probabilistic time series forecasting is a specialized form of probabilistic regression applied to temporal data, necessitating explicit modeling of sequential dependencies.
Within the DMRR framework, we propose the Fourier Adaptive Lite Diffusion Architecture (FALDA), which leverages point-guided diffusion models for TSF while reducing the influence of non-stationarity and noise on probabilistic learning. We further analyze the underlying mechanism through a comparative discussion of diffusion-based TSF models.

\subsection{Problem Statement} \label{sec:falda:problem-statement}
In the time series forecasting task, let $X = \{X^{t}\}_{t=1}^{T} \in \mathbb{R}^{T \times D}$ represent an observed multivariate time series with $T$ historical time steps, where each $X^t \in \mathbb{R}^D$ denotes the $D$-dimensional observation vector at time $t$. Given this lookback window $X$, the objective is to forecast the subsequent $S$ time steps, denoted as $Y = \{Y^{t}\}_{t=1}^{S} \in \mathbb{R}^{S \times D}$. 

\subsection{Main framework}\label{sec:FALDA-main-framework}
\paragraph{FALDA}
As shown in Figure~\ref{fig:2-FALDA}, the time series is first decomposed into three components: a \textbf{non-stationary term}, $Y_{\text{non}}$, representing temporal components that exhibit time-varying statistical properties; a \textbf{stationary term}, $Y_{\text{stat}}$, comprising components whose statistical properties remain invariant over time; and a \textbf{noise term}, $Y_{\text{noise}}$, reflecting inherent stochastic disturbances within the time series. 
Following \cite{24diffusionts, 2024fan}, this decomposition is performed using the Fourier transform. Specifically, the non-stationary component is extracted by reconstructing the time series from the frequencies corresponding to the $K_1$ largest amplitudes \cite{2024fan}, while the noise component is obtained by reconstructing the time series from the frequencies associated with the $K_2$ smallest amplitudes \cite{24diffusionts, 2025D3U}:
\begin{equation} \label{eq:decomposition}
    Y_{\text{non}} = \mathcal{F}^{-1}(\mathrm{Top}(\mathcal{F}(Y), K_1)), \quad Y_{\text{noise}} = \mathcal{F}^{-1}(\mathrm{Bottom}(\mathcal{F}(Y), K_2)).
\end{equation}
Here, $\mathcal{F}$ denotes the Fourier transform and $\mathcal{F}^{-1}$ denotes the inverse Fourier transform. The operators $\mathrm{Top}(\cdot, K_1)$ and $\mathrm{Bottom}(\cdot, K_2)$ select the frequency components with the $K_1$ largest and the $K_2$ smallest amplitudes, respectively. 
The stationary term is defined as:
\begin{equation} \label{eq:y_norm}
Y_{\text{stat}} = Y - Y_{\text{non}} - Y_{\text{noise}}.
\end{equation}
Similarly, the decomposition for $X$ is given by $X = X_{\text{non}} + X_{\text{stat}} + X_{\text{noise}}$. 
Based on this decomposition, FALDA integrates three key components: (1) a non-stationary adapter (\textbf{NS-Adapter}) $f_{w}$, which models the non-stationary term $Y_{\text{non}}$ by addressing evolving temporal patterns and mitigating epistemic uncertainty; (2) a time series backbone (\textbf{TS-Backbone}) $g_{\phi}$, which captures temporally invariant patterns to model the stationary component $Y_{\text{stat}}$; (3) a conditional diffusion process with a lightweight denoiser $\hat{R}^{(0)}_{\theta}$, which specializes in handling aleatoric uncertainty by modeling the inherent noise component $Y_{\text{noise}}$ in the data. The predictions for the non-stationary and stationary components are given by:
\begin{equation}\label{eq:non-norm-adapter}
    \hat{Y}_{\text{non}} = f_{w}(X_{\text{non}}), \quad \hat{Y}_{\text{stat}} = g_{\phi}(X_{\text{stat}}).
\end{equation}
Here $f_w$ is implemented as a multi-layer perceptron (MLP) to effectively capture non-stationary patterns, while $g_{\phi}$ serves as a flexible backbone that can be substituted by conventional point forecasting models. For further details on the implementation of $f_w$, please refer to Appendix~\ref{appendix:model-non-stationary-adapter}.

Eq. \eqref{eq:non-norm-adapter} gives a preliminary estimation $\hat{Y}=\hat{Y}_{\text{non}} + \hat{Y}_{\text{stat}}$. We use DDPM to model the residual component, which is defined as $R = Y - \hat{Y}$. During the reverse process, the posterior mean is parameterized as: $\tilde{\mu}_{\theta}(R^{(k)}, k) = \frac{\sqrt{\bar{\alpha}_{k-1}}\beta_k}{1-\bar{\alpha}_k} \hat{R}^{(0)}_{\theta} (R^{(k)}, k, c) + \frac{\sqrt{\alpha_k}(1-\bar{\alpha}_{k-1})}{1-\bar{\alpha}_k}R^{(k)}, k =K, K-1, ..., 1$. $R^{(k)}$ represents the noise sample at step $k$, and condition $c$ is set to the noise term of the lookback window, $X_{\text{noise}}$. The denoiser $\hat{R}^{(0)}_{\theta} (R^{(k)}, k, c)$ directly reconstructs the target $R = R^{(0)}$ instead of learning the diffusion noise at each step. This approach alleviates the learning difficulty of time series data \cite{shen2023timediff, 24diffusionts}. An estimate of the residuals is generated through reverse sampling: $\hat{R}^{(K)} \rightarrow \hat{R}^{(K-1)} \rightarrow \dots \rightarrow \hat{R}^{(0)} = \hat{R}$. The final output is the sum of the three component outputs in FALDA:
\begin{equation}
    \hat{Y}_{\text{FALDA}} = \hat{Y}_{\text{non}} + \hat{Y}_{\text{stat}} + \hat{R} .
\end{equation}
In alignment with the multi-component decomposition framework of FALDA, we propose a tailored loss function designed to facilitate multi-task optimization. To effectively capture non-stationary patterns, we define the non-stationary term loss $\mathcal{L}_{\text{non}}$ to provide prior guidance. Simultaneously, to ensure the overall accuracy of the preliminary point estimations, we define the overall point estimation loss $\mathcal{L}_{\text{point}}$. These two loss functions can be expressed as:
\begin{equation} \label{eq:non_norm_loss}
    \mathcal{L}_{\text{non}} = \ell(Y_{\text{non}}, \hat{Y}_{\text{non}}), \quad  \mathcal{L}_{\text{point}} = \ell(Y, \hat{Y}), 
\end{equation}
where $\ell$ is the $L_1$ loss. The alternative loss
$\mathcal{L}_{\text{alter}}$ simultaneously optimizes the denoiser and fine-tunes the point estimate model through two terms:
\begin{equation} \label{eq:loss_diffusion}
    \mathcal{L}_{\text{alter}} = \lambda_s  \underbrace{\|\text{sg}(R) - \hat{R}^{(0)}_{\theta}(R^{(k)}, k, c)\|^2}_{\mathcal{L}_{\text{diffusion}}} + \eta_s \underbrace{ \|R - \text{sg}(\hat{R}^{(0)}_{\theta}(R^{(k')}, k', c))\|^2}_{\mathcal{L}_{\text{finetune}}}.
\end{equation}
Here, $R=Y-\hat{Y}$. The first term $\mathcal{L}_{\text{diffusion}}$ targets the optimization of the denoiser, where the stop-gradient operation $\text{sg}(\cdot)$ ensures no interference with the point estimate model’s training.
The second term $\mathcal{L}_{\text{finetune}}$ fine-tunes the point estimate models, improving them alongside the denoiser. Here, $k'$ is a hyperparameter that enables flexible selection of the diffusion step during the fine-tuning process. Additionally, two scheduling hyperparameters, $\lambda_s$ and $\eta_s$, are introduced to control the alternating optimization of the two losses in $\mathcal{L}_{\text{alter}}$. These parameters depend on the current training epoch $s$, and are governed by a threshold $\delta$ and a period $\Delta$: 
\begin{equation} \label{eq:lambda_s_eta_s}
\lambda_s = \begin{cases} 
1, & s \ge \delta \ \text{and} \ s \bmod \Delta \neq 0 \\ 
0, & \text{otherwise}
\end{cases}, \
\eta_s = \begin{cases} 
1, & s \ge \delta \ \text{and} \ s \bmod \Delta = 0 \\ 
0, & \text{otherwise}
\end{cases},
\end{equation}
where the hyperparameter $\delta$ determines the pretraining duration (in epochs) for the point forecasting models, while $\Delta$ controls the alternating intervals between denoiser training and fine-tuning.
The final loss function is given by:
\begin{equation} \label{eq:loss}
    \mathcal{L} = \mathcal{L}_{\text{non}} + \mathcal{L}_{\text{point}} + \mathcal{L}_{\text{alter}}.
\end{equation}
For complete training and inference algorithm of FALDA, please refer to Appendix~\ref{appendix:alg}.

\paragraph{DEMA}
We design DEMA (Decomposition MLP with AdaLN), a lightweight denoiser denoted as $\hat{R}^{(0)}_{\theta}$,
to effectively predict the future time series noise term $Y_{\text{noise}}$. As a conditional denoiser, $\hat{R}^{(0)}_{\theta}(\cdot)$ takes the $k$-step noise sample $R^{(k)} \in \mathbb{R}^{S \times D}$, the diffusion step $k$, and condition  $c=X_{\text{noise}} \in \mathbb{R}^{T \times D}$ as input. The input $R^{(k)}$ and condition $c$ are projected into a latent space with dimension $H_d$ through the following embedding process:
\begin{equation}
    h_{k}^{[0]} = \text{Linear}( R^{(k)}) \in \mathbb{R}^{H_d \times D},\quad e_k = \text{Linear}(\text{PE}(k)) + \text{Linear}(c)\in \mathbb{R}^{H_d \times D},
\end{equation}
where $\text{PE}(\cdot)$ is sinusoidal embedding \cite{2017transformer, li2024mar}. The embedding $h_k$ and $e_k$ are then processed by an $L$-layer encoder. 
At each layer $l \in \{0, 1, ..., L-1\}$, the encoder performs the following computations:
\begin{equation} \label{eq:denoiser_ma}
    \left[{\tau}_{\text{season}}^{[l]}, {\tau}_{\text{trend}}^{[l]}\right] = 
    \left[{h}_k^{[l]} - \text{MA}_a({h}_k^{[l]}), \text{MA}_a({h}_k^{[l]})\right] ,
\end{equation}
\begin{equation}
    \left[\gamma^{[l]}_i, \beta^{[l]}_i, o^{[l]}_i\right] = \text{Linear}(\text{SiLU}(e_k)),
\end{equation}
\begin{equation} \label{eq:denoiser:layernorm}
    \bar{\tau}_{i}^{[l]} = (\gamma^{[l]}_{i} + 1) \odot \text{LayerNorm}({\tau}_{i}^{[l]}) + \beta^{[l]}_{i}, 
\end{equation}
where $\gamma^{[l]}_i$, $\beta^{[l]}_i$ and $o^{[l]}_i$ represent the scale factor, shift factor, and gating factor, respectively, with $i\in\{\text{season}, \text{trend}\}$.  
$\text{MA}_a$ denotes the moving average operation with kernel size $a$. The output of an encoder layer is computed as:
\begin{equation}
    {h}_k^{[l+1]} = {h}_k^{[l]} + (o^{[l]}_{\text{season}} + o^{[l]}_{\text{trend}}) \odot  \text{Linear}(\bar{\tau}_{\text{season}}^{[l]} + \bar{\tau}_{\text{trend}}^{[l]}) .
\end{equation}
After processing through an adaptive layer normalization decoder, the denoiser generates its final output $\hat{R}^{(0)}_{\theta}(R^{(k)}, k, c) \in \mathbb{R}^{S \times D}$, where $\theta$ represents all trainable parameters in the network.

\subsection{Analysis of Different Diffusion-based Time Series Models with Residual Learning}\label{sec:difference with existing work}
TMDM and D$^3$U are representative diffusion-based time series forecasting models that incorporate residual learning. Specifically, TMDM employs CARD as its underlying diffusion mechanism, while D$^3$U and FALDA utilize DMRR (see Appendix \ref{appendix:tmdm-d3u} for detailed mathematical formulations). As discussed in Section \ref{sec:DMRR}, DMRR and CARD share identical transition probabilities and posterior distributions, indicating that their stochastic dynamics are mathematically equivalent.
Although theoretically equivalent, DMRR's architecture provides crucial modeling advantages and is inherently more suitable for time series forecasting tasks compared to CARD. Real-world time series typically consist of multiple components (trend, seasonality, and inherent noise) that are often corrupted during the diffusion process due to gradual noise addition. This corruption makes it challenging to recover the time series distribution from the noise data \cite{24diffusionts}. Although the preliminary estimate partially captures temporal patterns, it remains difficult for CARD framework to learn the residual distribution from the noisy full time series $Y^{(k)}$, which also represents a limitation of TMDM.

In contrast, D$^3$U and FALDA, which are based on DMRR, alleviate this limitation through their residual learning paradigm. This paradigm explicitly decouples the preliminary estimation from the limiting distribution in CARD and focuses exclusively on modeling the residual between the preliminary estimate and the ground truth. The residual components encompass both epistemic and aleatoric uncertainties \cite{gawlikowski2023aleatoric}. While D$^3$U demonstrates promising performance by utilizing latent representations from the encoder as the condition in the reverse process, its generalized modeling approach primarily captures epistemic uncertainty due to the lack of explicit consideration for distinct temporal components. 
This architectural characteristic limits its ability to explicitly model the pure underlying probability distribution, especially the aleatoric uncertainty component. Furthermore, this limitation may result in diminishing returns when applied to backbone models that already exhibit strong predictive capabilities. An elaborate analysis of this phenomenon is provided in Appendix~\ref{appendix:probability-view}. Our framework extends this approach by introducing dedicated network architectures designed to capture three key temporal components. This enhanced modeling capability enables more balanced learning of both epistemic and aleatoric uncertainties, thereby contributing to improved point estimation accuracy.

\section{Experiments}
\subsection{Experiment Setup}
% \paragraph{Datasets and baselines} 
In this experiment, we evaluate the performance of multivariate time series forecasting using six widely recognized real-world datasets: ILI, Exchange-Rate, ETTm2, Electricity, Traffic, and Weather. More details are provided in Appendix~\ref{appendix:dataset}. We include 13 state-of-the-art TSF models in our baselines including both point forecasting and probabilistic forecasting methods: Informer \cite{zhou2021informer}, Autoformer \cite{wu2021autoformer}, FEDformer \cite{2022FEDformer}, DLinear \cite{2023DLinear}, TimesNet \cite{wu2023timesnet}, PatchTST \cite{2023patchtst},  iTransformer \cite{liu2024itransformer}, TimeGrad \cite{rasul2021timegrad}, CSDI \cite{tashiro2021csdi}, SSSD \cite{sssd}, TimeDiff \cite{shen2023timediff}, TMDM \cite{2024TMDM}, D$^3$U \cite{2025D3U}.

% \paragraph{Implementation details}
We set the lookback window $T = 96$ and prediction length $S = 192$, except for ILI where $T = S = 36$. Following \cite{2020ddpm}, we use $K = 1000$ diffusion timesteps with a linear noise schedule. FALDA employs iTransformer as its default backbone if not stated otherwise, with DDIM \cite{2021ddim} for inference acceleration. Implementation details are fully provided in Appendix \ref{appendix:implementation}.

\subsection{Main Result} \label{sec:main_result}
\begin{table*}[htbp]
\vspace{-5pt}
\setlength{\tabcolsep}{3pt} % 减小列间距
\renewcommand{\arraystretch}{1.1} % 减小行高
\caption{Comparison of MAE and MSE across six real-world datasets. \textbf{Bold} denotes the best-performing method for each metric-dataset combination, while \underline{underlined} indicates the second-best.}
\vspace{1pt}
\begin{tiny} % 使用更小的字体
\begin{center}
\resizebox{0.83 \linewidth}{!}{ % 调整表格宽度为页面宽度的 95%
\begin{tabular}{c|cc|cc|cc|cc|cc|cc} 
\toprule[1.2pt]
\multirow{2}{*}{\scalebox{0.9}{Methods}} & \multicolumn{2}{c}{\scalebox{0.9}{ILI}} & \multicolumn{2}{c}{\scalebox{0.9}{Exchange}} & \multicolumn{2}{c}{\scalebox{0.9}{Electricity}} & \multicolumn{2}{c}{\scalebox{0.9}{Traffic}} & \multicolumn{2}{c}{\scalebox{0.9}{ETTm2}} & \multicolumn{2}{c}{\scalebox{0.9}{Weather}} \\ 
\cmidrule(lr){2-3}\cmidrule(lr){4-5}\cmidrule(lr){6-7}\cmidrule(lr){8-9}\cmidrule(lr){10-11}\cmidrule(lr){12-13}
\scalebox{0.9}{Metric} & \scalebox{0.9}{MSE} & \scalebox{0.9}{MAE} & \scalebox{0.9}{MSE} & \scalebox{0.9}{MAE} & \scalebox{0.9}{MSE} & \scalebox{0.9}{MAE} & \scalebox{0.9}{MSE} & \scalebox{0.9}{MAE} & \scalebox{0.9}{MSE} & \scalebox{0.9}{MAE} & \scalebox{0.9}{MSE} & \scalebox{0.9}{MAE} \\ 
\toprule[1.2pt]
\scalebox{0.9}{Informer} & 4.620 & 1.456 & 1.092 & 0.853 & 0.319 & 0.399 & 0.696 & 0.379 & 0.494 & 0.525 & 0.598 & 0.544 \\ \midrule
\scalebox{0.9}{Autoformer} & 3.366 & 1.210 & 0.537 & 0.526 & 0.227 & 0.332 & 0.616 & 0.382 & 0.269 & 0.327 & 0.276 & 0.336 \\ \midrule
\scalebox{0.9}{FEDformer} & 2.679 & 1.163 & 0.276 & 0.384 & 0.198 & 0.312 & 0.606 & 0.377 & 0.269 & 0.325 & 0.276 & 0.336 \\ \midrule
\scalebox{0.9}{DLinear} & 2.235 & 1.059 & \underline{0.167} & \underline{0.301} & 0.196 & 0.285 & 0.598 & 0.370 & 0.284 & 0.362 & 0.218 & 0.278 \\ \midrule
\scalebox{0.9}{TimesNet} & 2.671 & 0.986 & 0.224 & 0.343 & 0.184 & 0.289 & 0.617 & 0.336 & 0.249 & 0.309 & 0.219 & 0.261 \\ \midrule
\scalebox{0.9}{PatchTST} & 2.374 & 0.918 & 0.181 & 0.303 & 0.205 & 0.307 & 0.463 & 0.311 & 0.251 & 0.312 & 0.223 & 0.258 \\ \midrule
\scalebox{0.9}{iTransformer} & \underline{1.833} & \underline{0.828} & 0.193 & 0.315 & \underline{0.164} & \underline{0.248} & \underline{0.413} & \underline{0.251} & \underline{0.246} & \textbf{0.300} & \underline{0.217} & \textbf{0.247} \\ \midrule
\scalebox{0.9}{TimeGrad} & 2.644 & 1.142 & 2.429 & 0.902 & 0.645 & 0.723 & 0.932 & 0.807 & 1.385 & 0.732 & 0.885 & 0.551 \\ \midrule
\scalebox{0.9}{CSDI} & 2.538 & 1.208 & 1.662 & 0.748 & 0.553 & 0.795 & 0.921 & 0.678 & 1.291 & 0.576 & 0.842 & 0.523 \\ \midrule
\scalebox{0.9}{SSSD} & 2.521 & 1.079 & 0.897 & 0.861 & 0.481 & 0.607 & 0.794 & 0.498 & 0.973 & 0.559 & 0.693 & 0.501 \\ \midrule
\scalebox{0.9}{TimeDiff} & 2.458 & 1.085 & 0.475 & 0.429 & 0.730 & 0.690 & 1.465 & 0.851 & 0.284 & 0.342 & 0.277 & 0.331 \\ \midrule
\scalebox{0.9}{TMDM} & 1.985 & 0.846 & 0.260 & 0.365 & 0.222 & 0.329 & 0.721 & 0.411 & 0.524 & 0.493 & 0.244 & 0.286 \\ \midrule
\scalebox{0.9}{D$^3$U} & 2.103 & 0.935 & 0.254 & 0.358 & 0.179 & 0.267 & 0.468 & 0.299 & \textbf{0.241} & 0.302 & 0.222 & 0.264 \\ \midrule
\scalebox{0.9}{Ours} & \textbf{1.666} & \textbf{0.821} & \textbf{0.165} & \textbf{0.296} & \textbf{0.163} & \textbf{0.248} & \textbf{0.412} & \textbf{0.251} & \underline{0.246} & \underline{0.301} & \textbf{0.215} & \underline{0.255} \\ \bottomrule[1.2pt]
\end{tabular}}
\end{center}
\end{tiny}
\label{tab:mae-mse}
\vspace{-12pt}
\end{table*} 
\paragraph{Forecasting performance and computational efficiency}
We conduct a comprehensive evaluation of the proposed model against state-of-the-art baselines for four metrics: CRPS, CRPS$_{\text{sum}}$, MAE, and MSE. CRPS and CRPS$_{\text{sum}}$ assess the probabilistic forecasting performance, while MAE and MSE evaluate the point forecasting accuracy. See Appendix \ref{appendix:metrics} for detailed metric descriptions. Table~\ref{tab:mae-mse} summarizes MAE and MSE results across six real-world datasets. Our method outperforms all baselines in four out of six datasets (ILI, Exchange, Electricity, and Traffic) for both MAE and MSE. On the remaining two datasets, our method consistently ranks among the top two performers. The most significant improvement is observed on the ILI dataset, where our model achieves a notable 9\% reduction in MSE compared to iTransformer, the second-best model, demonstrating FALDA's powerful ability in point forecasting.
% While holding the strong point forecasting capability, 
FALDA also presents superior or comparable probabilistic forecasting performance compared to previous diffusion-based models. Table \ref{tab:crps-result} shows the CRPS and CRPS$_{\text{sum}}$ metrics across 6 datasets. 
% On ILI, FALDA improves an average 23\% on CRPS and 29\% on CRPS$_{\text{sum}}$ compared to TMDM and D$^3$U. 
On Exchange, FALDA promotes an average of 9\% on CRPS and 39\% on CRPS$_{\text{sum}}$. 
In terms of efficiency, FALDA achieves an inference speed-up of up to 26.3$\times$ and a training speed-up of up to 13.7$\times$ compared to TMDM, as detailed in Appendix~\ref{appendix:train_infer_efficiency}.

% TODO
% \vspace{3cm}
% table_crps.tex
\begin{table*}[htbp]
\vspace{-2pt}
\setlength{\tabcolsep}{3pt} % 减小列间距
\renewcommand{\arraystretch}{1.1} % 减小行高
\caption{Comparison of CRPS and CRPS$_\text{sum}$ across six real-world datasets. \textbf{Bold} denotes the best-performing method for each metric-dataset combination, while \underline{underlined} indicates the second-best.}
\vspace{0pt}
\begin{tiny} % 使用更小的字体
\resizebox{0.98\linewidth}{!}{ % 调整表格宽度为页面宽度的 95%
\begin{tabular}{c|cc|cc|cc|cc|cc|cc} 
\toprule[1.2pt]
\multirow{2}{*}{\scalebox{0.9}{Methods}} & \multicolumn{2}{c}{\scalebox{0.9}{ILI}} & \multicolumn{2}{c}{\scalebox{0.9}{Exchange}} & \multicolumn{2}{c}{\scalebox{0.9}{ETTm2}} & \multicolumn{2}{c}{\scalebox{0.9}{Weather}} & \multicolumn{2}{c}{\scalebox{0.9}{Electricity}} & \multicolumn{2}{c}{\scalebox{0.9}{Traffic}} \\ 
\cmidrule(lr){2-3}\cmidrule(lr){4-5}\cmidrule(lr){6-7}\cmidrule(lr){8-9}\cmidrule(lr){10-11}\cmidrule(lr){12-13}
\scalebox{0.9}{Metric} & \scalebox{0.9}{CRPS} & \scalebox{0.9}{CRPS$_\text{sum}$} & \scalebox{0.9}{CRPS} & \scalebox{0.9}{CRPS$_\text{sum}$} & \scalebox{0.9}{CRPS} & \scalebox{0.9}{CRPS$_\text{sum}$} & \scalebox{0.9}{CRPS} & \scalebox{0.9}{CRPS$_\text{sum}$} & \scalebox{0.9}{CRPS} & \scalebox{0.9}{CRPS$_\text{sum}$} & \scalebox{0.9}{CRPS} & \scalebox{0.9}{CRPS$_\text{sum}$} \\ 
\toprule[1.2pt]
\scalebox{0.9}{TimeGrad} & 0.924 & 0.527 & 0.661 & 0.437 & 0.785 & 1.051 & 0.482 & 0.503 & 0.503 & 1.452 & 0.657 & 1.683 \\ \midrule
\scalebox{0.9}{CSDI} & 1.104 & 0.607 & 0.448 & 0.469 & 0.625 & 0.782 & 0.508 & 0.465 & 0.465 & 0.823 & 0.612 & 1.275 \\ \midrule
\scalebox{0.9}{SSSD} & 0.945 & 0.548 & 0.564 & 0.370 & 0.571 & 0.275 & 0.445 & 0.442 & 0.466 & 0.580 & 0.414 & 0.949 \\ \midrule
\scalebox{0.9}{TimeDiff} & 1.083 & 0.610 & 0.376 & 0.275 & 0.316 & 0.180 & 0.293 & 0.400 & 0.475 & 0.594 & 0.671 & 0.823 \\ \midrule
\scalebox{0.9}{TMDM} & \underline{0.921} & \underline{0.524} & \underline{0.316} & \underline{0.209}
& 0.380 & 0.226 & 0.226 & \underline{0.292} & 0.446 & \textbf{0.137} & 0.552 & \underline{0.179} \\ \midrule
\scalebox{0.9}{D$^3$U} & 0.951 & 0.566 & 0.318 & 0.210 & \textbf{0.243} & \underline{0.141} & \underline{0.207} & \textbf{0.283} & \textbf{0.202} & \underline{0.160} & \textbf{0.232} & 0.186 \\ \midrule
\scalebox{0.9}{Ours} & \textbf{0.721} & \textbf{0.387} & \textbf{0.289} & \textbf{0.126} & \underline{0.244} & \textbf{0.141} & \textbf{0.207} & 0.298 & \underline{0.231} & \underline{0.160} & \underline{0.245} & \textbf{0.163} \\ \bottomrule[1.2pt]
\end{tabular}}
\end{tiny}
\label{tab:crps-result}
\vspace{-11pt}
\end{table*}

\paragraph{Plug-and-play performance}
To evaluate the generality of our framework, we integrate four well-known point forecasting models into the FALDA framework: Autoformer \cite{wu2021autoformer}, Informer \cite{zhou2021informer}, Transformer \cite{2017transformer}, and iTransformer \cite{liu2024itransformer}. Table~\ref{tab:plug-and-play} shows their performance improvements with FALDA. The experimental results demonstrate consistent improvements in both MSE and MAE metrics across the majority of evaluated datasets. 
% Specifically, when integrated with Autoformer, FALDA yields substantial performance gains, most notably on the Exchange dataset where MSE and MAE are reduced by 56.7\% and 33.3\% respectively. 
The most significant improvements are observed for Informer, which achieves maximum reductions of 66.4\% in MSE and 46.2\% in MAE on the same dataset. 
% Similarly, the Transformer model exhibits considerable enhancement, particularly on the Exchange dataset with 58.7\% and 36.3\% reductions in MSE and MAE respectively.
For iTransformer, which serves as a strong baseline model, FALDA still provides measurable improvements (e.g., 14.6\% MSE reduction on Exchange) while maintaining competitive performance across other datasets. Notably, D$^3$U exhibits performance degradation when using iTransformer as the backbone, as evidenced in Tables 1 and 6 of \cite{2025D3U}.
These empirical results validate that FALDA effectively enhances forecasting performance for both relatively weaker backbones and state-of-the-art backbones, demonstrating its general applicability in time series forecasting tasks.

\begin{table*}[htbp]
\vspace{-1pt}
\centering
\setlength{\tabcolsep}{3pt} % 减小列间距
\renewcommand{\arraystretch}{1.1} % 减小行高
\caption{Plug-and-play performance improvement of FALDA on existing point forecasting methods. Better values are highlighted in \textbf{bold}.}
\vspace{1pt}
\begin{tiny} % 使用更小的字体
\resizebox{0.65 \linewidth}{!}{ % 调整表格宽度为页面宽度的 90%
\begin{tabular}{c|cc|cc|cc|cc} 
\toprule[1.2pt]
\multirow{2}{*}{\scalebox{0.9}{Model}} & \multicolumn{2}{c}{\scalebox{0.9}{Exchange}} & \multicolumn{2}{c}{\scalebox{0.9}{ILI}} & \multicolumn{2}{c}{\scalebox{0.9}{ETTm2}} & \multicolumn{2}{c}{\scalebox{0.9}{Electricity}} \\ 
\cmidrule(lr){2-3}\cmidrule(lr){4-5}\cmidrule(lr){6-7}\cmidrule(lr){8-9}
\scalebox{0.9}{Metric} & \scalebox{0.9}{MSE} & \scalebox{0.9}{MAE} & \scalebox{0.9}{MSE} & \scalebox{0.9}{MAE} & \scalebox{0.9}{MSE} & \scalebox{0.9}{MAE} & \scalebox{0.9}{MSE} & \scalebox{0.9}{MAE} \\ 
\toprule[1.2pt]
\scalebox{0.9}{Autoformer} & 0.537 & 0.526 & 3.366 & 1.210 & 0.269 & 0.327 & 0.227 & 0.332 \\ 
{+ ours} 
\scalebox{0.9}{} & \textbf{0.232} & \textbf{0.351} & \textbf{2.655} & \textbf{1.118} & \textbf{0.247} & \textbf{0.313} & \textbf{0.209} & \textbf{0.316} \\ 
\scalebox{0.9}{Promotion} & \textbf{56.7\%} & \textbf{33.3\%} & \textbf{21.1\%} & \textbf{7.5\%} & \textbf{8.2\%} & \textbf{4.2\%} & \textbf{7.6\%} & \textbf{4.7\%} \\ \midrule
\scalebox{0.9}{Informer} & 1.092 & 0.853 & 4.620 & 1.456 & 0.494 & 0.525 & 0.319 & 0.399 \\ 
{+ ours} 
\scalebox{0.9}{} & \textbf{0.367} & \textbf{0.460} & \textbf{3.122} & \textbf{1.178} & \textbf{0.293} & \textbf{0.363} & \textbf{0.305} & \textbf{0.388} \\ 
\scalebox{0.9}{Promotion} & \textbf{66.4\%} & \textbf{46.2\%} & \textbf{32.4\%} & \textbf{19.1\%} & \textbf{40.8\%} & \textbf{30.9\%} & \textbf{4.5\%} & \textbf{2.8\%} \\ \midrule
\scalebox{0.9}{Transformer} & 0.975 & 0.765 & 4.044 & 1.327 & 0.427 & 0.472 & 0.256 & 0.347 \\ 
{+ ours} 
\scalebox{0.9}{} & \textbf{0.403} & \textbf{0.488} & \textbf{3.226} & \textbf{1.254} & \textbf{0.390} & \textbf{0.423} & \textbf{0.251} & \textbf{0.344} \\ 
\scalebox{0.9}{Promotion} & \textbf{58.7\%} & \textbf{36.3\%} & \textbf{20.2\%} & \textbf{5.5\%} & \textbf{8.7\%} & \textbf{10.2\%} & \textbf{1.8\%} & \textbf{0.9\%} \\ \midrule
\scalebox{0.9}{iTransformer} & 0.193 & 0.315 & 1.833 & 0.828 & 0.246 & \textbf{0.300} & 0.164 & 0.248 \\ 
{+ ours} 
\scalebox{0.9}{} & \textbf{0.165} & \textbf{0.296} & \textbf{1.666} & \textbf{0.821} & \textbf{0.246} & 0.301 & \textbf{0.163} & \textbf{0.248} \\ 
\scalebox{0.9}{Promotion} & \textbf{14.6\%} & \textbf{6.0\%} & \textbf{9.1\%} & \textbf{0.8\%} & \textbf{0.1\%} & {-0.5\%} & \textbf{1.1\%} & \textbf{0.0\%} \\ \bottomrule[1.2pt]
\end{tabular}}
\end{tiny}
\label{tab:plug-and-play}
\vspace{-5pt}
\end{table*}

\subsection{Ablation Study}
To further validate that our architecture enables the diffusion model to focus on aleatoric uncertainty learning, we investigate the model's performance under different conditioning strategies. Table~\ref{tab:ablation:condition} compares the results when using $X_{\text{noise}}$, $X$ as conditioning inputs, along with an unconditional case. The experiments show that the $X_{\text{noise}}$-conditioned version achieves optimal performance across all evaluated datasets, while the unconditional case performs comparably to the $X_{\text{noise}}$-conditioned scenario. In contrast, the $X$-conditioned approach shows the worst performance among the three conditioning types. These results indicate that 
epistemic uncertainty does not dominate the components of diffusion learning, thereby the residual estimation through $X$-conditioning provides limited benefits. In conclusion, the FALDA framework successfully achieves enhanced learning of aleatoric uncertainty while simultaneously improving point estimation capability. 
Additionally, Appendix~\ref{appendix:ab_denoiser} presents an ablation study comparing DEMA with its variants, systematically validating the effectiveness of its time-decomposition operation.
Appendix \ref{appendix:ab_training_strategy} shows the impact of different fine-tuning strategies during training.
Appendix~\ref{appendix:ab_diffusion} demonstrates the effectiveness of the DMRR component and the NS-Adapter module. Figure \ref{fig:hexagonal_performance_falda} shows the advantage of our framework when using the same NSformer \cite{2022NSformer} backbone. The complete experimental results are provided in Appendix \ref{appendix:exp:same backbone}.
\begin{table}[h]
\vspace{-5pt}
\centering
\caption{Ablation study on different condition strategies. The best results are boldfaced.}
\begin{footnotesize}
\resizebox{0.75 \linewidth}{!}{
\begin{tabular}{c|cc|cc|cc|cc}
\toprule
\multirow{2}{*}{Condition type} & \multicolumn{2}{c}{Exchange} & \multicolumn{2}{c}{ILI} & \multicolumn{2}{c}{ETTm2} & \multicolumn{2}{c}{Weather} \\
\cmidrule(lr){2-3}\cmidrule(lr){4-5}\cmidrule(lr){6-7}\cmidrule(lr){8-9}
 & MSE & MAE & MSE & MAE & MSE & MAE & MSE & MAE \\
\midrule
$X_{\text{noise}}$ & \textbf{0.165} & \textbf{0.296} & \textbf{1.666} & 0.821 & \textbf{0.246} & \textbf{0.301} & \textbf{0.215} & \textbf{0.255} \\
uncond & 0.184 & 0.311 & 1.675 & \textbf{0.785} & 0.251 & 0.307 & 0.217 & 0.260 \\
$X$ & 0.178 & 0.312 & 1.994 & 0.966 & 0.258 & 0.313 & 0.216 & 0.261 \\
\bottomrule
\end{tabular}}
\end{footnotesize}
\label{tab:ablation:condition}
\vspace{-10pt}
\end{table} 

\section{Related Works}
Diffusion-based time series forecasting models have demonstrated their efficacy in modeling multivariate time series distributions.  
TimeGrad \cite{rasul2021timegrad} integrates a recurrent neural network (RNN) with a diffusion model for autoregressive forecasting, using hidden states to condition the diffusion process. While effective for short-term predictions, its autoregressive nature causes error accumulation and inefficiency in long-term forecasting.
CSDI \cite{tashiro2021csdi} adopts a non-autoregressive fashion which uses self-supervised masking to guide the denoising process, with historical information and observation as conditions. 
SSSD \cite{sssd} enhances time series modeling by integrating conditional diffusion with structured state space models, improving long-range dependency capture and computational efficiency over transformer-based approaches like CSDI.
TimeDiff \cite{shen2023timediff} introduces inductive bias to the outputs of the conditioning network through two mechanisms (future mixup and autoregressive initialization) to facilitate the denoising process. 
A range of diffusion-based TSF models, combined with strong point forecasting models, have recently demonstrated strong performance in both point forecasting and probabilistic forecasting capability. TMDM \cite{2024TMDM} utilizes strong point forecasting models, such as NSformer \cite{2022NSformer}, to extract prior knowledge and inject it into both the forward and reverse diffusion processes to guide the generation of the forecast window. 
D$^3$U \cite{2025D3U} employs a point forecasting model to nonprobabilistically model high-certainty components in the time series, generating embedded representations that are conditionally injected into a diffusion model.

\section{Conclusion}
In this paper, we present \textbf{FALDA}, a Fourier-based diffusion framework for time series forecasting that systematically addresses both deterministic patterns and stochastic uncertainties. Our Fourier decomposition and component-specific modeling approach enable FALDA to decouple complex time series into interpretable components while clearly separating epistemic and aleatoric uncertainty. The integration of a conditional diffusion model with historical noise conditioning significantly improves stochastic component prediction.
Our theoretical analysis provides formal guarantees for the mathematical foundations of the framework, while the proposed alternating training strategy is proven effective for jointly optimizing multiple model components. Extensive empirical evaluations across six diverse real-world datasets consistently demonstrate FALDA's superior performance, setting new state-of-the-art results in both point forecasting and probabilistic prediction tasks.

% Future research directions include extending the framework to handle extremely high-dimensional time series and investigating adaptive decomposition methods for non-stationary components. The principles developed in this work may also find applications in other sequential data modeling domains beyond traditional time series forecasting.

\bibliographystyle{unsrt}
\bibliography{ref}

%%%%%%%%%%%%%%%%%%%%%%%%%%%%%%%%%%%%%%%%%%%%%%%%%%%%%%%%%%%%

\appendix

\section{Mathematical Derivations} \label{appendix:mathematical derivations}
\subsection{Preliminary: Denoising Diffusion Probabilistic Models}\label{appendix:ddpm}
Denoising Diffusion Probabilistic Models (DDPM) \cite{2020ddpm} is a canonical diffusion model consisting of the forward and reverse processes. Let $q(y_0)$ be the data distribution, the forward process is a Markov chain $\{y_0, y_1, ..., y_k, ...\}$ that gradually transforms the data distribution into a standard Gaussian distribution: $y_k \xrightarrow{d} \mathcal{N}(0, I), k\rightarrow \infty$. Here "$\xrightarrow{d}$" denotes convergence in distribution. The transition probability is  $q(y_{k} | y_{k-1}) = \mathcal{N}(y_{k}; \sqrt{\alpha_k}y_{k-1}, \beta_k I)$. where $\alpha_k = 1-\beta_k \in (0, 1)$ represents the noise schedule.  
The single-step transition formulation at step $k$ can be demonstrated as below using the reparameterization trick \cite{2014vae}:
\begin{equation}
    y_{k} = \sqrt{\alpha_k} y_{k-1} + \sqrt{\beta_k} z_{k} , \quad z_{k} \sim \mathcal{N}(0, I). 
\end{equation}
Iterating the single-step formulation leads to the multi-step transition formulation at step $k$:
\begin{equation} \label{eq:ddpm: multi-step}
 y_{k} = \sqrt{\bar{\alpha}_k} y_{0} + \sqrt{1-\bar{\alpha}_k} \bar{z}_{k}, \quad \bar{z}_{k} \sim \mathcal{N}(0, I).
\end{equation}
Here $ \bar{\alpha}_k = \prod_{s=1}^{k} \alpha_s, \bar{\beta}_k = \prod_{s=1}^{k} \beta_s $. The reverse process starts from a standard Gaussian noise $y_K$, and has the following posterior distribution at step $k$:
\begin{equation}\label{eq:ddpm_posterior}
\begin{gathered}
    q(y_{k-1} | y_k, y_0) = \mathcal{N}(y_{k-1}; \tilde{\mu}_k, \tilde{\beta}_kI), \\
    \tilde{\mu}_k = \frac{\sqrt{\bar{\alpha}_{k-1}}\beta_k}{1-\bar{\alpha}_k}y_0 + \frac{\sqrt{\alpha_k}(1-\bar{\alpha}_{k-1})}{1-\bar{\alpha}_k}y_k, \quad \tilde{\beta}_k = \frac{1-\bar{\alpha}_{k-1}}{1-\bar{\alpha}_k}\beta_k.
\end{gathered}
\end{equation}
By substituting $y_0$ with $y_0 = \frac{1}{\sqrt{\bar{\alpha}_k}}y_k - \frac{\sqrt{1-\bar{\alpha}_k}}{\sqrt{\bar{\alpha}_k}}\bar{z}_k$, we have $\tilde{\mu}(y_k, k) = \frac{1}{\sqrt{\alpha_k}}\left(y_k - \frac{\beta_k}{\sqrt{1-\bar{\alpha}_k}}\bar{z}_k\right)$. The mean $\tilde{\mu}_k$ is typically parameterized using two different strategies: (1) modeling the diffusion noise $\tilde{z}_k$ with $\hat{\epsilon}_{\theta}(y_k, k)$, or (2) directly parameterizing the target $y_0$ in Eq. \eqref{eq:ddpm_posterior} with $\hat{y}_{\theta}(y_k, k)$.

\subsection{Equivalence between CARD and DMRR}
\begin{prop}
Let $y_k$ be the Markov chain defined in Eq. \eqref{eq:card_forward}. Let $l_k = y_k - f_{\phi}(x)$, we have: 
\begin{equation}
q(l_{k} | l_{k-1}) = \mathcal{N}(l_{k}; \sqrt{\alpha_k}l_{k-1}, \beta_k I)
\label{eq:res_t-forward}
\end{equation}
and
\begin{equation} \label{eq:res-t-posterior}
\begin{gathered}
q(l_{k-1} | l_k, l_0) = \mathcal{N}(y_{l-1}; \tilde{\mu}_k, \tilde{\beta}_kI), \\
\tilde{\mu}_k = \frac{\sqrt{\bar{\alpha}_{k-1}}\beta_k}{1-\bar{\alpha}_k}l_0 + \frac{\sqrt{\alpha_k}(1-\bar{\alpha}_{k-1})}{1-\bar{\alpha}_k}l_k, \quad \tilde{\beta}_k = \frac{1-\bar{\alpha}_{k-1}}{1-\bar{\alpha}_k}\beta_k.
\end{gathered}
\end{equation}

Thus, the residual process $l_t$ exhibits identical Markovian dynamics to the standard DDPM framework in both forward and reverse processes as shown in Eq. \eqref{eq:ddpm: multi-step} and Eq. \eqref{eq:ddpm_posterior}.
\end{prop}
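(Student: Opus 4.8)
The plan is to exploit the fact that the residual $l_k = y_k - f_\phi(x)$ is obtained from the CARD chain of Eq.~\eqref{eq:card_forward} by a single \emph{deterministic shift} by the constant $f_\phi(x)$, and that this shift is engineered precisely to absorb the drift term $(1-\sqrt{1-\beta_k})f_\phi(x)$. Once I show that the forward one-step and marginal kernels of $l_k$ coincide with those of standard DDPM, both claims \eqref{eq:res_t-forward} and \eqref{eq:res-t-posterior} follow from the corresponding DDPM statements in Eqs.~\eqref{eq:ddpm: multi-step} and \eqref{eq:ddpm_posterior}.

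First I would establish the one-step transition \eqref{eq:res_t-forward}. Substituting $y_k = l_k + f_\phi(x)$ and $y_{k-1} = l_{k-1} + f_\phi(x)$ into the one-step CARD update and using $\sqrt{\alpha_k} = \sqrt{1-\beta_k}$ (valid since $\alpha_k = 1-\beta_k$), the prior contributions combine as $\sqrt{\alpha_k}f_\phi(x) + (1-\sqrt{1-\beta_k})f_\phi(x) = f_\phi(x)$, which cancels against the $f_\phi(x)$ on the left-hand side. This leaves $l_k = \sqrt{\alpha_k}l_{k-1} + \sqrt{\beta_k}z_k$, i.e.\ the Gaussian conditional $\mathcal{N}(l_k;\sqrt{\alpha_k}l_{k-1},\beta_k I)$. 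The identical substitution into the multi-step line of Eq.~\eqref{eq:card_forward} gives $l_k = \sqrt{\bar{\alpha}_k}l_0 + \sqrt{1-\bar{\alpha}_k}\bar{z}_k$, matching Eq.~\eqref{eq:ddpm: multi-step}.

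For the posterior \eqref{eq:res-t-posterior} there are two equivalent routes. The clean route notes that the posterior of a Gaussian Markov chain is fully determined by its forward kernels through Bayes' rule $q(l_{k-1}\mid l_k,l_0)\propto q(l_k\mid l_{k-1})\,q(l_{k-1}\mid l_0)$; since these kernels now coincide exactly with DDPM's, the resulting posterior is the DDPM posterior of Eq.~\eqref{eq:ddpm_posterior} with $y$ replaced by $l$, yielding the stated $\tilde\mu_k$ and $\tilde\beta_k$. The direct route starts from the CARD posterior mean $\tilde m_k$ in Eq.~\eqref{eq:card-posterior}, substitutes $y_0=l_0+f_\phi(x)$ and $y_k=l_k+f_\phi(x)$, and uses that the mean of $l_{k-1}$ equals $\tilde m_k - f_\phi(x)$.

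The main obstacle is confined to this direct route: I must verify that the aggregate coefficient of $f_\phi(x)$ in $\tilde m_k$ equals exactly $1$, so that it cancels against the $-f_\phi(x)$ shift and leaves a purely residual drift. Collecting the $f_\phi(x)$ terms over the common denominator $1-\bar{\alpha}_k$, the requirement $\tilde m_k - f_\phi(x) = \tilde\mu_k$ reduces to showing that the combination
\[
\beta_k\sqrt{\bar{\alpha}_{k-1}} + (1-\bar{\alpha}_{k-1})\sqrt{\alpha_k} + (\sqrt{\bar{\alpha}_k}-1)\bigl(\sqrt{\alpha_k}+\sqrt{\bar{\alpha}_{k-1}}\bigr)
\]
vanishes. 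The key step is to expand the last product using $\sqrt{\bar{\alpha}_k} = \sqrt{\bar{\alpha}_{k-1}}\sqrt{\alpha_k}$ and regroup: the $\sqrt{\bar{\alpha}_{k-1}}$ terms collapse via $\beta_k+\alpha_k-1=0$, and the $\sqrt{\alpha_k}$ terms collapse via $(1-\bar{\alpha}_{k-1})+\bar{\alpha}_{k-1}-1=0$. Both annihilations hold precisely because $\alpha_k=1-\beta_k$ and $\bar{\alpha}_k=\bar{\alpha}_{k-1}\alpha_k$. The variance $\tilde\beta_k$ is inherited unchanged, since a deterministic shift does not alter covariance. I expect this drift-coefficient cancellation to be the only nontrivial computation; everything else follows from the shift-invariance of the injected noise.
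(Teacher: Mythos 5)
Your proposal is correct, and its ``direct route'' is essentially the paper's own proof: substituting $y_0 = l_0 + f_\phi(x)$ and $y_k = l_k + f_\phi(x)$ into the CARD posterior mean $\tilde{m}_k$ of Eq.~\eqref{eq:card-posterior} and showing that the aggregate coefficient of $f_\phi(x)$ equals one is exactly the paper's computation, down to the same two cancellations (the $\sqrt{\bar{\alpha}_{k-1}}$ terms vanish by $\alpha_k+\beta_k=1$, the $\sqrt{\alpha_k}$ terms by $\bar{\alpha}_k=\bar{\alpha}_{k-1}\alpha_k$); the combination you isolate does vanish under exactly the regrouping you describe. You differ in two respects, both to your credit. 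First, for the forward part you derive the one-step kernel $q(l_k\mid l_{k-1})=\mathcal{N}(l_k;\sqrt{\alpha_k}\,l_{k-1},\beta_k I)$ by substituting into the one-step CARD update, which is what Eq.~\eqref{eq:res_t-forward} actually asserts; the paper instead verifies only the multi-step marginal $l_k=\sqrt{\bar{\alpha}_k}\,l_0+\sqrt{1-\bar{\alpha}_k}\,\bar{z}_k$, so your version matches the stated claim more faithfully. Second, your ``clean route'' for the posterior --- $l_k$ is Markov (a deterministic shift of a Markov chain), so $q(l_{k-1}\mid l_k,l_0)\propto q(l_k\mid l_{k-1})\,q(l_{k-1}\mid l_0)$ is determined by the forward kernels alone, which you have already identified with DDPM's --- is genuinely different from, and cleaner than, the paper's argument: it renders the coefficient cancellation unnecessary and is self-contained, not relying on the stated form of the CARD posterior in Eq.~\eqref{eq:card-posterior}. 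What the direct route buys in exchange is an explicit consistency check of that CARD formula; what your Bayes route buys is immunity to any error in it, plus a conceptual explanation of why the equivalence is automatic once the forward kernels agree.
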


\begin{proof}
    \textbf{Proof of Equation \eqref{eq:res_t-forward}:} \\
    Starting from the result in Eq. \eqref{eq:card_forward}, 
    \begin{align*}
    l_k &=  y_k - f_{\phi}(x) \\
     &= \sqrt{\bar{\alpha}_k}y_0 + (1-\sqrt{\bar{\alpha}_k})f_{\phi}(x) + \sqrt{1-\bar{\alpha}_k}\bar{z}_k - f_{\phi}(x) \\
    &= \sqrt{\bar{\alpha}_k} (y_0 - f_{\phi}(x)) + \sqrt{1-\bar{\alpha}_k}\bar{z}_k + f_{\phi}(x) - f_{\phi}(x) \\
    &=\sqrt{\alpha_k} l_0 + \sqrt{1-\bar{\alpha}_k}\bar{z}_k.
    \end{align*}
    This demonstrates that $l_t$ satisfies the standard DDPM forward process formulation.

    \textbf{Proof of Equation \eqref{eq:res-t-posterior}:} \\
    since $ l_k  =  y_k - f_{\phi}(x)$ and $q(y_{k-1}| y_k, y_0) = \mathcal{N}(y_{k-1}; \tilde{m}_k, \tilde{\beta}_k I)$, we have:

    \begin{align*}
    q(l_{k-1} | l_k, l_0) = \mathcal{N}(l_{k-1}, \tilde{m}_k -f_{\phi}(x), \tilde{\beta}_k I).
    \end{align*}
    
    We now analyze the mean $\tilde{m}_k - f_{\phi}(x)$. With the definition of $\tilde{m}_k$ in Eq. \ref{eq:card-posterior}, we have:
    \begin{equation*}
        \tilde{m}_k-f_\phi(x)=A_ky_0+B_ky_k+(C_k-1)f_\phi(x),
    \end{equation*}
    where the coefficients are:
    \begin{equation*}
        \begin{gathered}
            A_k:=\frac{\beta_k\sqrt{\bar{\alpha}_{k-1}}}{1-\bar{\alpha}_k},\quad B_k:=\frac{(1-\bar{\alpha}_{k-1})\sqrt{\alpha_k}}{1-\bar{\alpha}_k},\quad \\ C_k:=1+\frac{(\sqrt{\bar{\alpha}_k}-1)(\sqrt{\alpha_k}+\sqrt{\bar{\alpha}_{k-1}})}{1-\bar{\alpha}_k}.
        \end{gathered}
    \end{equation*}
    Substituting $y_k = l_k + f_{\phi}(x)$ yields:
    \begin{equation*}
        \tilde{m}_k-f_\phi(x)=A_kl_0+B_kl_k+(A_k+B_k+C_k-1)f_\phi(x).
    \end{equation*}
    In the following step, the coefficients of $f_{\phi}(x)$ can be expanded as:
    \begin{align*}
        A_k+B_k+C_k-1 &=\frac{\beta_k\sqrt{\bar{\alpha}_{k-1}}+(1-\bar{\alpha}_{k-1})\sqrt{\alpha_k}}{1-\bar{\alpha}_k}+\frac{(\sqrt{\bar{\alpha}_k}-1)(\sqrt{\alpha_k}+\sqrt{\bar{\alpha}_{k-1}})}{1-\bar{\alpha}_k} \\
        &= \frac
        {
        \sqrt{\bar{\alpha}_{k-1}} - \alpha_k \sqrt{\bar{\alpha}_{k-1}} - \sqrt{\alpha_k}\bar{\alpha}_{k-1} + \sqrt{\alpha_k \bar{\alpha}_{k}} 
         + \sqrt{\bar{\alpha}_k \bar{\alpha}}_{k-1} - \sqrt{\bar{\alpha}_{k-1}} }
         {1-\bar{\alpha}_k} \\
         & = \frac
        {
         - \alpha_k \sqrt{\bar{\alpha}_{k-1}} - \sqrt{\alpha_k}\bar{\alpha}_{k-1} + \sqrt{\alpha_k \bar{\alpha}_{k}} 
         + \sqrt{\bar{\alpha}_k \bar{\alpha}_{k-1}  }}
         {1-\bar{\alpha}_k}.
    \end{align*}
    Using the identity $\bar{\alpha}_k=\bar{\alpha}_{k-1}\alpha_k$, we have:
    \begin{equation*}
        A_k+B_k+C_k-1 = 0.
    \end{equation*}
    Therefore, the posterior mean $\tilde{m}_k - f_{\phi}(x)$ satisfies:
    \begin{align*}
        \tilde{m}_k - f_{\phi}(x)
        &= \frac{\beta_k \sqrt{\bar{\alpha}_{k-1}}}{1-\bar{\alpha}_k}l_0 + \frac{(1-\bar{\alpha}_{k-1})\sqrt{\alpha_k}}{1-\bar{\alpha}_k}l_k \\
        &= \tilde{\mu}_k.
    \end{align*}
    We have thus established that the reverse distribution of the residual process satisfies: $q(l_{k-1} | l_k, l_0) = \mathcal{N}(l_{k-1}; \tilde{\mu}_k, \tilde{\beta}_kI)$, This completes the proof of Eq. \eqref{eq:res-t-posterior}.
\end{proof}

\section{Methodology of TMDM and D\texorpdfstring{\textsuperscript{3}U}{D3U}}\label{appendix:tmdm-d3u}
In this section, we present the details of two previously developed diffusion-based time series forecasting methods: Transformer-Modulated  Diffusion Model (TMDM) \cite{2024TMDM} and Diffusion-based Decoupled Deterministic and Uncertain framework (D$^3$U) \cite{2025D3U}. The notation employed below is consistent with the notation used in Section \ref{sec:falda:problem-statement}.

\subsection{TMDM}
TMDM employs CARD as its underlying diffusion framework. Given a conditional information $\hat{Y}$, the end point of TMDM's diffusion process is:
\begin{equation}
    \lim_{k \rightarrow \infty} q(Y^{(k)}| \hat{Y}) = \mathcal{N}(\hat{Y}, I).
\end{equation}
Here $Y^{(k)}$ represents the noise sample of $Y$ at step $k$. With a noise schedule $\alpha_t$ and $\beta_t$ defined in Section \ref{sec:DMRR}, the forward process at step k can be defined as:
\begin{equation}
    q\left(Y^{(k)} | Y^{(k-1)}, \hat{Y}\right) \sim \mathcal{N}\left(\sqrt{\alpha_k} Y^{(k-1)} + (1 - \sqrt{1-\beta_k}) \hat{Y}, \beta_k I\right).
\end{equation}
The posterior distribution in the reverse diffusion process is:
\begin{equation}
    q\left(Y^{(k-1)} | Y^{(k)}, Y^{(0)}, \hat{Y}\right) \sim \mathcal{N}\left(Y^{(k-1)}; \tilde{m}_k, \tilde{\beta}_k I\right),
\end{equation}
where $\tilde{m}_k$ and $\tilde{\beta}_k$ are consistent with Eq. \eqref{eq:card-posterior}. Specifically, $\tilde{m}_k$ satisfies:
\begin{equation}
    \tilde{m}_k = \frac{\beta_k \sqrt{\bar{\alpha}_{k-1}}}{1-\bar{\alpha}_k}Y^{(0)} + \frac{(1-\bar{\alpha}_{k-1})\sqrt{\alpha_k}}{1-\bar{\alpha}_k}Y^{(k)} + (1 + \frac{(\sqrt{\bar{\alpha}_k} -1)(\sqrt{\alpha_k}+ \sqrt{\bar{\alpha}_{k-1}})}{1-\bar{\alpha}_k})\hat{Y}.
\end{equation}

\subsection{\texorpdfstring{D\textsuperscript{3}U}{D3U}} \label{appendix:d3u_methodology}
The D$^3$U framework builds upon the DMRR diffusion architecture. It employs a pretrained network $f_{\text{D}^3\text{U}}$ to generate preliminary estimates $\hat{Y}$, where the encoder embedding $f_{\text{enc}}(X)$ serves as the condition for the reverse diffusion process.

Defining the residual term $R = Y - \hat{Y}$, the forward diffusion process follows:
\begin{equation}
    q\left(R^{(k)} | R^{(k-1)}, \hat{R}\right) \sim \mathcal{N}\left(\sqrt{\alpha_k} R^{(k-1)}, \beta_k I\right).
\end{equation}

The posterior process is:
\begin{equation} \label{eq:appendix-d3u-posterior}
     q\left(R^{(k-1)} | R^{(k)}, R^{(0)}, f_{\text{enc}}(X)\right) \sim \mathcal{N}\left(R^{(k-1)}; \tilde{\mu}_k, \tilde{\beta}_k I\right).
\end{equation}
Here $\tilde{\mu}_k$ is consistent with Eq. \eqref{eq:ddpm_posterior}:
\begin{equation}
    \tilde{\mu}_k = \frac{\sqrt{\bar{\alpha}_{k-1}}\beta_k}{1-\bar{\alpha}_k}R^{(0)} + \frac{\sqrt{\alpha_k}(1-\bar{\alpha}_{k-1})}{1-\bar{\alpha}_k}R^{(k)}.
\end{equation}

\section{Probability view of residual component modeling} \label{appendix:probability-view}

As discussed in Section \ref{sec:falda}, D³U models epistemic uncertainty by conditioning on encoder outputs without intentionally decoupling it from temporal aleatoric uncertainty. This limits optimal performance scaling on more capable backbone models, which already exhibit low epistemic uncertainty. In this section, we provide a probabilistic analysis of different modeling approaches for time series forecasting. Specifically, Appendix \ref{appendix:general_situation} summarizes the general case, while Appendices \ref{appendix:d3u_situation} and \ref{appendix: our_situation} respectively analyze the probabilistic modeling of D³U and FALDA, highlighting their distinct learning objectives. We demonstrate how FALDA models both types of uncertainty through time-series components decomposition, allowing both deterministic and probabilistic models to focus on learning their respective components.

\subsection{General Situation} \label{appendix:general_situation}
In general, a time series $X$ can be decomposed into two components:
\begin{equation}
    X=X_{\text{nf}} + \epsilon_X,
\end{equation}
 where $X_{\text{nf}}$ is the ideal noise-free part (incorporating trend, seasonality, and other structured patterns), and $\epsilon_{X}$ denotes the inherent zero-mean noise in the time series data. Notably, in real-world scenarios, $\epsilon_X$ often follows complex non-Gaussian distributions. This canonical decomposition naturally extends to the forecasting target: $Y = Y_{nf} + \epsilon_Y $. To simplify the notation, in the following paragraphs, the subscripts for the noises only indicate which components they are associated with. The goal of the time series forecasting task is then to learn the conditional distribution: $P(Y|X)$. Conventionally, a deterministic function $f$ is employed to estimate the posterior expectation:
\begin{equation}
E(Y| X) = E(Y_{nf}| X) + \mathbb{E}\left(\epsilon_Y | f_{\phi}(X)\right) = E(Y_{nf}| X)  \approx f(X_{\text{nf}} + \epsilon_{X}).
\end{equation} \label{eq:appendix:regression_form}
This yields the following regression form for the prediction:
\begin{equation}
    Y = f(X_{\text{nf}} + \epsilon_{X}) + \epsilon_{X, Y}.
\end{equation}
In Equation \ref{eq:appendix:regression_form}, $\epsilon_{X,Y}$ comprises two distinct uncertainty components: aleatoric uncertainty stemming from inherent data randomness (specifically, the time series noise), and epistemic uncertainty arising from model estimation errors \cite{2017uncertainty}.

 Under ideal conditions where the point-estimation model perfectly captures $\mathbb{E}(Y|X)$, $\epsilon_{X,Y}$ would reduce to purely aleatoric uncertainty and become uncorrelated with $f(X)$, satisfying:
\begin{equation}
    \mathbb{E}\left(\epsilon_{X, Y} | f_{\phi}(X)\right) = 0.
\end{equation}
This implies the lookback window $X$ contains no additional information to improve point forecasts, resulting in $\epsilon_{X,Y} = \epsilon_Y$. However, in practice, point-estimation models rarely achieve this theoretical optimum, typically retaining some epistemic uncertainty. The subsequent discussion will examine how different time series forecasting models handle these distinct uncertainty components.

\subsection{D3U situation} \label{appendix:d3u_situation}
As established in Appendix~\ref{appendix:d3u_methodology}, the D$^3$U framework leverages the encoder-derived embedding representation $f_{\text{enc}}(X)$ as a conditioning mechanism for probabilistic residual learning, subsequent to the preliminary estimation $f(X)$. Formulated within the regression expression in the previous section, this approach specifically targets the conditional expectation $E(\epsilon_{X, Y}| f_{\text{enc}}(X))$, yielding:
\begin{equation} \label{eq:d3u_probability_view}
Y = f(X) + g(f_{\text{enc}}(X)) + \tilde{\epsilon}_{X, Y}.
\end{equation}
In this context, $\tilde{\epsilon}_{X, Y}$ denotes the total uncertainty of D$^3$U. Since the encoder of the point estimation model $f$ learns a good representation of the historical time series, $g(f_{\text{enc}}(X))$ can further model the epistemic uncertainty of $f(X)$. Comparing to $\epsilon_{X, Y}$, $\tilde{\epsilon}_{X, Y}$ may contain less
epistemic uncertainty. However, due to the predominance of predictions with epistemic uncertainty, this facilitation may diminish when the backbone model is sufficiently powerful. More importantly, since the true probabilistic component, uncertainty, is not explicitly separated, diffusion models may focus on epistemic uncertainty rather than uncertainty. This undifferentiated treatment ultimately constrains their probabilistic learning capability.

\subsection{Our situation} \label{appendix: our_situation}
To mitigate the epistemic uncertainty, first, we decompose the history time series into three parts $X = X_{\text{non}} + X_{\text{stat}} + X_{\text{noise}}$. Three models are jointly trained to forecast the whole future time series. Beyond the point-estimation model, we introduce an NS-adapter to improve modeling accuracy and reduce epistemic uncertainty, thereby alleviating part of the computational burden on the diffusion model. This architecture allows the diffusion model to concentrate solely on capturing aleatoric uncertainty, with the noise component $X_{\text{noise}}$ serving as the conditioning input for the diffusion process. The corresponding mathematical formulation is as follows:
\begin{equation}
Y = f_{\text{non}}(X_{\text{non}}) + f_{\text{stat}}(X_{\text{stat}}) + g_{\text{noise}}(X_{\text{noise}}) + \bar{\epsilon}_{X, Y}.
\end{equation}
Under this formulation, $\bar{\epsilon}_{X,Y}$ contains more aleatoric uncertainty, since explicit component separation effectively mitigates epistemic uncertainty. Compared to the expression $g(f_{\text{enc}}(X)) + \tilde {\epsilon}_{X,Y}$ in Eq. ~\eqref{eq:d3u_probability_view}, our approach shows superior properties. First, the composite term $g_{\text{noise}}(X_{\text{noise}}) + \bar{\epsilon}_{X,Y}$ is not dominated by epistemic uncertainty, since $f_{\text{non}}$ already takes into account most of the non-smooth patterns. Second, this decomposition allows the diffusion model to focus more effectively on capturing pure uncertainty without interference from the cognitive uncertainty component.

\section{Algorithms} \label{appendix:alg}
We formally present the complete algorithmic procedures of FALDA. Algorithm \ref{alg:falda-training} details the end-to-end training protocol with multi-task optimization. The corresponding inference procedure is specified in Algorithm \ref{alg:falda-inference}. 
% \usepackage{algorithm}
% \usepackage{algorithmicx}
% \usepackage{algpseudocode}
% \usepackage{amsmath}
% \usepackage{amssymb}

% Training Algorithm
\begin{algorithm}[ht]
\caption{FALDA Training Procedure}
\label{alg:falda-training}
\begin{algorithmic}[1]
\State \textbf{Require}: TS-backbone $g_\phi$, NS-adapter $f_w$, denoiser $\hat{R}^{(0)}_{\theta}$
\State \textbf{Hyperparameters}: Threshold $\delta$, period $\Delta$, $k'$, noise schedule: $\alpha_t, \beta_t$, max diffusion step $K$
\State \textbf{Input}: Lookback window $X \in \mathbb{R}^{T \times D}$, future ground truth $Y \in \mathbb{R}^{S \times D}$

% \For{epoch $s = 1$ \textbf{to} $M$}
\State Initialize the parameteres
\Repeat
    \State \textbf{Decomposition via Fourier Transform} \hfill $\triangleright$ Eq. \eqref{eq:decomposition}, \eqref{eq:y_norm}
    
    \State $X_{\text{non}}, X_{\text{stat}}, X_{\text{noise}} \gets X$
    \State $Y_{\text{non}}, Y_{\text{stat}}, Y_{\text{noise}} \gets Y$
    
    \State \textbf{Non-stationary \& Stationary Components modeling:}
    \State $\hat{Y}_{\text{non}} \gets f_w(X_{\text{non}})$ \hfill $\triangleright$ Eq.~\eqref{eq:non-norm-adapter}
    \State $\hat{Y}_{\text{stat}} \gets g_\phi(X_{\text{stat}})$
    
    \State \textbf{Residual Learning:}
    \State $R \gets Y - \hat{Y}_{\text{non}} - \hat{Y}_{\text{stat}}$
    \State $k \sim \mathcal{U}(\{1,2,..., K \})$
    \State$\epsilon \sim \mathcal{N}(0, I)$
    \State $R^{(k)} \gets \sqrt{\bar{\alpha}_k}R + \sqrt{1-\bar{\alpha}_k}\epsilon$, $R^{(k')} \gets \sqrt{\bar{\alpha}_{k'}}R + \sqrt{1-\bar{\alpha}_{k'}}\epsilon$,
    \State Predict residual: $\hat{R}^{(0)}_{\theta}(R^{(k)}, k, X_{\text{noise}}),  \hat{R}^{(0)}_{\theta}(R^{(k')}, k', X_{\text{noise}})$
    \State \textbf{Loss Computation:}
    \State Compute the loss $\mathcal{L}$ in Eq. \eqref{eq:loss}
    \State Take gradient descent step on: $\nabla \mathcal{L}$
% \EndFor
\Until converged
\end{algorithmic}
\end{algorithm}

% Inference Algorithm
\begin{algorithm}[ht]
\caption{FALDA Inference Procedure}
\label{alg:falda-inference}
\begin{algorithmic}[1]

\State \textbf{Require}: Pretrained TS-backbone $g_\phi$, NS-adapter $f_w$ and denoiser $\hat{R}^{(0)}_{\theta}$
\State \textbf{Input}: Lookback window $X \in \mathbb{R}^{T \times D}$
\State \textbf{Decomposition via Fourier Transform:} \hfill $\triangleright$ Eq. \eqref{eq:decomposition}, \eqref{eq:y_norm}
\State $X_{\text{non}}, X_{\text{stat}}, X_{\text{noise}} \gets X$ 

\State \textbf{Predict Non-stationary \& Stationary Terms:}
\State $\hat{Y}_{\text{non}} \gets f_w(X_{\text{non}})$
\State $\hat{Y}_{\text{stat}} \gets g_\phi(X_{\text{stat}})$

\State \textbf{Generate Residual Prediction via Reverse Diffusion:}
\State Sample $R^{(K)} \sim \mathcal{N}(0, I)$
\For{$k = K$ \textbf{down to} $1$}
    \State Predict residual: $\hat{R}^{(0)} \gets \hat{R}^{(0)}_{\theta}(R^{(k)}, k, X_{\text{noise}})$
    \State Compute posterior mean 
    \State $\quad \tilde{\mu}_\theta \gets \frac{\sqrt{\bar{\alpha}_{k-1}}\beta_k}{1-\bar{\alpha}_k} \hat{R}^{(0)} + \frac{\sqrt{\alpha_k}(1-\bar{\alpha}_{k-1})}{1-\bar{\alpha}_k}R^{(k)}$
    \State Sample $R^{(k-1)} \sim \mathcal{N}(\tilde{\mu}_\theta, \tilde{\beta}_k I)$ \hfill $\triangleright$ Eq.~\eqref{eq:ddpm_posterior}
\EndFor
\State $\hat{R} \gets R^{(0)}$
\State \textbf{Final Prediction:}
\State $\hat{Y} \gets \hat{Y}_{\text{non}} + \hat{Y}_{\text{stat}} + \hat{R}$
\State \textbf{Return} $\hat{Y}$
\end{algorithmic}
\end{algorithm}

\section{Experiment Details}  \label{appendix:sec:experiment_details}
\subsection{Datasets} \label{appendix:dataset}
Experiments are performed on six widely-used real-world time series datasets: (1) influenza-like illness (ILI) reports the weekly ratio of patients presenting influenza-like symptoms to total clinical visits, obtained from U.S. CDC surveillance data from 2002 to 2021. \footnote{ILI: \url{https://gis.cdc.gov/grasp/fluview/fluportaldashboard.html}} Exchange-Rate \cite{Lai_Chang_Yang_Liu_2018} provides daily currency exchange rates for eight countries from 1990 to 2016. \footnote{Exchange: \url{https://github.com/laiguokun/multivariate-time-series-data}} ETTm2 \cite{zhou2021informer} contains 7 factors of electricity transformer from July 2016 to July 2018, which is recorded by 15 minutes. \footnote{ETTm2: \url{https://github.com/zhouhaoyi/ETDataset}} Electricity \cite{li2019enhancing} collects hourly power consumption from 321 customers from 2012 through 2014. \footnote{Electricity: \url{https://archive.ics.uci.edu/ml/datasets/ElectricityLoadDiagrams20112014}} Traffic \cite{wu2023timesnet} collates hourly road occupancy rates measured by 862 sensors on San Francisco Bay Area freeways between January 2015 and December 2016. \footnote{Traffic: \url{https://zenodo.org/record/4656132}} Weather \cite{zhou2021informer} includes meteorological time series collected from the Weather Station of the Max Planck Biogeochemistry Institute in 2020, with 21 meteorological indicators collected every 10 minutes. \footnote{Weather: \url{https://www.bgc-jena.mpg.de/wetter/}}

We follow the data processing protocol and split configurations from \cite{wu2021autoformer} and \cite{2024TMDM}. The lookback length is fixed as 96, and the prediction length is fixed as 192, with the exception of the ILI dataset, where the lookback length and prediction length are both set to 36. The details of all the datasets are provided in Table \ref{tab:dataset}.

\begin{table}[htbp] 
\centering
\small
\caption{Detailed dataset descriptions, including dimension, context length, label length, prediction length, and frequency.}
\begin{tabular}{lccccc}
\hline
Dataset & Dim  & Context length & Label length & Prediction length & Frequency \\
\hline
ILI & 7  & 36 & 16 & 36 & 1 week \\
Exchange & 8  & 96 & 48 & 192 & 1 day \\
Electricity & 321  & 96 & 48 & 192 & 1 hour \\
Traffic & 862  & 96 & 48 & 192 & 1 hour \\
ETTm2 & 7  & 96 & 48 & 192 & 15 mins \\
Weather & 21 & 96 & 48 & 192 & 10 mins \\
\hline
\end{tabular}
\label{tab:dataset}
\end{table}

\subsection{Evaluation Metrics} \label{appendix:metrics}

We employ two categories of evaluation metrics: deterministic metrics for point forecasts and probabilistic metrics for uncertainty estimation. Let $x \in \mathbb{R}^d$ denote the ground truth values and $\hat{x} \in \mathbb{R}^d$ represent the predicted values.

\begin{itemize}
    \item \textbf{Mean Squared Error (MSE)}:
    \begin{equation}
        \text{MSE}(x, \hat{x}) = \frac{1}{d} \|x - \hat{x}\|_2^2 = \frac{1}{d} \sum_{i=1}^d (x_i - \hat{x}_i)^2,
    \end{equation}
    where $\|\cdot\|_2$ denotes the $\ell_2$ norm.
    
    \item \textbf{Mean Absolute Error (MAE)}:
    \begin{equation}
        \text{MAE}(x, \hat{x}) = \frac{1}{d} \|x - \hat{x}\|_1 = \frac{1}{d} \sum_{i=1}^d |x_i - \hat{x}_i|,
    \end{equation}
    where $\|\cdot\|_1$ denotes the $\ell_1$ norm.
\end{itemize}

For assessing probabilistic forecasts and uncertainty estimation, we utilize:

\begin{itemize}
    \item \textbf{Continuous Ranked Probability Score (CRPS)} \cite{matheson1976scoring}, \cite{gneiting2007strictly}:
    \begin{equation}
        \text{CRPS}(F, x) = \int_{-\infty}^{\infty} (F(y) - \mathbb{I}\{x \leq y\})^2 dy,
    \end{equation}
    where $F(y)$ is the predicted cumulative distribution function.
    
    \item \textbf{Summed CRPS (CRPS$_{\text{sum}}$)}:
    \begin{equation}
        \text{CRPS}_{\text{sum}} = \mathbb{E}_t \left[\text{CRPS}(F^{-1}_{\text{sum}}, \textstyle\sum_{i=1}^d x_i)\right],
    \end{equation}
    where $F^{-1}_{\text{sum}}$ is obtained through dimension-wise summation of samples.
\end{itemize}

To specifically evaluate prediction intervals, we employ:

\begin{itemize}
    \item \textbf{Prediction Interval Coverage Probability (PICP)} \cite{yao2019quality}:
    \begin{equation}
        \text{PICP} = \frac{1}{N} \sum_{i=1}^N \mathbb{I}\{x_i \in [\hat{x}_i^{\text{low}}, \hat{x}_i^{\text{high}}]\},
    \end{equation}
    where $N$ represents the total number of observations, $x_i \in \mathbb{R}^d$ denotes the true value for the $i$-th observation, and $\hat{x}_n^{\text{low}}$ and $\hat{x}_n^{\text{high}}$ correspond to the $2.5^{th}$ and $97.5^{th}$ percentiles of the predicted distribution respectively, with $\mathbb{I}$ being the indicator function. This metric quantifies the empirical coverage probability by measuring the proportion of true observations falling within the predicted interval bounds. When the predicted distribution matches the true data distribution perfectly, the PICP should theoretically equal the nominal coverage level of 95\% for the specified $2.5^{th}-97.5^{th}$ percentile range.
    
    \item \textbf{Quantile Interval Coverage Error (QICE)} \cite{2022card}:
    \begin{equation}
        \text{QICE} = \frac{1}{M} \sum_{m=1}^{M} \left|\rho_m - \frac{1}{M}\right|, \quad 
        \rho_m = \frac{1}{N} \sum_{i=1}^N \mathbb{I}\{x_i \in [\hat{x}_i^{\text{low},m}, \hat{x}_i^{\text{high},m}]\}.
    \end{equation}
    QICE can be viewed as PICP with finer granularity and without uncovered quantile ranges. Under the optimal scenario where the predicted distribution perfectly matches the target distribution, the QICE value should be equal to 0.
\end{itemize}

\subsection{Implementation of Non-Stationary Adapter in FALDA} \label{appendix:model-non-stationary-adapter}
As discussed in Section~\ref{sec:falda}, we propose a non-stationary adapter $f_w$ to capture the non-stationary patterns in time series data. While a linear projection from $X_{\text{non}}$ to $\hat{Y}_{\text{non}}$ offers a straightforward approach, we enhance this design by additionally incorporating the complete lookback window $X$ as auxiliary input following the approach outlined in~\cite{2024fan}. This extension enables richer temporal context utilization, improving prediction accuracy for $Y_{\text{non}}$. The output of the adapter is computed as follows:

\begin{equation}
    \hat{Y}_{\text{non}} = f_w(X_{\text{non}}, X) = W_3 \operatorname{ReLU}\left(W_2 \operatorname{Concat}\left(\operatorname{ReLU}(W_1 X_{\text{non}}), X\right)\right),
\end{equation}

where $W_1$, $W_2$, and $W_3$ are learnable weight matrices. The concatenation operation explicitly combines the processed non-stationary features with the original input, allowing the network to leverage both representations.

\subsection{Implementation details} \label{appendix:implementation}
All the experiments are conducted on a single NVIDIA L20 48GB GPU, utilizing PyTorch \cite{paszke2019pytorch}.
We set the number of diffusion steps to $K=1000$, adopting a linear noise schedule following the configuration in \cite{2024TMDM}. Following DDIM \cite{2021ddim}, we accelerate the sampling procedure by selecting a 10-point subsequence (with a stride of 100 steps) from the original 1000 diffusion steps, effectively skipping intermediate computations while maintaining generation quality. Correspondingly, we adjust the fine-tuning diffusion step $k'$ to align with the subsampling stride, setting $k' = 100$ to match the first sampling interval. The parameter $\eta$ controls the determinism level in DDIM sampling, where $\eta=0$ yields a fully deterministic generation process. We utilize the Adam optimizer \cite{kingma2014adam} with a learning rate of $10^{-4}$ and L1 loss. Early stopping is applied after \{5, 10, 15\} epochs without improvement, with a maximum of 200 epochs.
 The batch size is set to 32 during training and 8 for testing. The context length, label length, and prediction length are detailed in Table \ref{tab:dataset}. To ensure robust statistical evaluation, we generate 100 prediction instances for each test sample to reliably compute the evaluation metrics. We show the point estimate performance and probabilistic forecasting performance in Table \ref{tab:mae-mse} and Table \ref{tab:crps-result}, respectively. The hidden dimension $H_d$ is selected from the set $\{64, 128, 256, 512\}$. Hyperparameters $K_1$ and $K_2$ are chosen from $\{0, 1, 2, \dots, \lfloor T/2 \rfloor + 1\}$. The kernel size for the moving average operation in DEMA is fixed at $a=25$. 
For reference, we provide a detailed hyperparameter configuration for FALDA with iTransformer as the backbone architecture in Table \ref{tab:config_1}. 
Furthermore, as discussed in Section \ref{sec:main_result}, we extend our framework to integrate with alternative backbone models (Autoformer, Transformer, and Informer), with their corresponding configurations detailed in Table \ref{tab:config_2}. All relevant hyperparameters referenced in Section \ref{sec:falda} are explicitly documented in these configuration tables.

\begin{table}[htbp]
\centering
\begin{minipage}{0.5\textwidth}  % 减小宽度为0.45
\centering
\caption{Hyperparameter settings for FALDA with iTransformer backbone.}
\footnotesize 
\setlength{\tabcolsep}{2pt} 
\renewcommand{\arraystretch}{0.9} 
\newcommand{\numfont}{\fontsize{8.5pt}{9pt}\selectfont}

\begin{tabular}{@{}l*{6}{c}@{}} 
\toprule
& {\fontsize{8}{10}\selectfont Exchange} & {\fontsize{8}{10}\selectfont ILI} & {\fontsize{8}{10}\selectfont ETTm2} & {\fontsize{8}{10}\selectfont Electricity} & {\fontsize{8}{10}\selectfont Traffic} & {\fontsize{8}{10}\selectfont Weather} \\
\midrule
$\eta$       & \numfont 1.0      & \numfont 0.5 & \numfont 1.0   & \numfont 1.0         & \numfont 1.0     & \numfont 1.0     \\
% $H_d$   & \numfont 512      & \numfont 512 & \numfont 512   & \numfont 512         & \numfont 512     & \numfont 64      \\
$\delta$  & \numfont 0        & \numfont 0   & \numfont 1     & \numfont 2           & \numfont 1       & \numfont 0       \\
$\Delta$    & \numfont 3        & \numfont 3   & \numfont 10    & \numfont 10          & \numfont 20      & \numfont 3       \\
$K_1$     & \numfont 0        & \numfont 0   & \numfont 0     & \numfont 0           & \numfont 0       & \numfont 2       \\
$K_2$     & \numfont 4        & \numfont 2   & \numfont 5     & \numfont 20          & \numfont 3       & \numfont 25      \\
\bottomrule
\end{tabular}
\label{tab:config_1}
\end{minipage}
\hfill  % 使用\hfill填充空白
\begin{minipage}{0.5\textwidth}  % 减小宽度为0.45
\centering
\caption{Hyperparameter settings for FALDA with other backbones.}
\footnotesize 
\setlength{\tabcolsep}{2pt} 
\renewcommand{\arraystretch}{0.9} 
\newcommand{\numfont}{\fontsize{8.5pt}{9pt}\selectfont}

\begin{tabular}{@{}l*{6}{c}@{}} 
\toprule
& {\fontsize{8}{10}\selectfont Exchange} & {\fontsize{8}{10}\selectfont ILI} & {\fontsize{8}{10}\selectfont ETTm2} & {\fontsize{8}{10}\selectfont Electricity} & {\fontsize{8}{10}\selectfont Traffic} & {\fontsize{8}{10}\selectfont Weather} \\
\midrule
$\eta$       & \numfont 1.0      & \numfont 0.5 & \numfont 1.0   & \numfont 1.0         & \numfont 1.0     & \numfont 1.0     \\
% $H_d$   & \numfont 512      & \numfont 512 & \numfont 512   & \numfont 512         & \numfont 512     & \numfont 64      \\
$\delta$  & \numfont 0        & \numfont 0   & \numfont 1     & \numfont 2           & \numfont 1       & \numfont 0       \\
$\Delta$    & \numfont 3        & \numfont 3   & \numfont 10    & \numfont 10          & \numfont 20      & \numfont 3       \\
$K_1$     & \numfont 2        & \numfont 2   & \numfont 5     & \numfont 0           & \numfont 30       & \numfont 2       \\
$K_2$     & \numfont 4        & \numfont 2   & \numfont 5     & \numfont 10          & \numfont 2       & \numfont 25      \\
\bottomrule
\end{tabular}
\label{tab:config_2}
\end{minipage}
\end{table}

\section{Additional Experimental Results}
\subsection{Ablation Study on Denoiser Architecture}\label{appendix:ab_denoiser}
As described in Section \ref{sec:FALDA-main-framework}, we introduce DEMA (Denoising MLP with Adaptive Layer Normalization), an MLP-based denoising module that utilizes Adaptive Layer Normalization (AdaLN) for feature transformation. The encoder layer employs a Moving Average (MA) operation to separate the latent variable into two components: seasonal and trend features. These components are then processed through independent AdaLN transformations, each governed by three trainable parameters: scale, shift, and gating coefficients, as specified in Equation \eqref{eq:denoiser:layernorm}. To evaluate the architectural decisions in DEMA, we compare against two baseline variants in Table \ref{tab:ablation:denoiser}:

\begin{itemize}
    \item \textbf{AD-MA}: This baseline removes the Moving Average decomposition in Eq. \eqref{eq:denoiser_ma}, applying AdaLN only to the undivided latent variable. While this configuration helps assess the importance of MA decomposition, it reduces the parameter count compared to DEMA. To address this confounding factor, we introduce a second controlled variant.
    
    \item \textbf{AD+LV}: This baseline maintains DEMA's parameter count while removing the feature decomposition step. Specifically, it implements two parallel AdaLN operations on the original latent variable (rather than on decomposed features). This design enables direct comparison of architectural contributions by isolating the effect of feature decomposition from pure parameter increases.
\end{itemize}

Experimental results demonstrate that DEMA consistently outperforms both variants in most datasets.

\begin{table}[htbp]
\centering
\scriptsize
\caption{Ablation study on denoiser architecture: comparison of DEMA and its variants. All experiments are repeated 10 times to compute the Means and Standard Deviation.}
\begin{tabular}{l|c|c|c|c|c|c}
\toprule
Dataset & \multicolumn{2}{c|}{DEMA} & \multicolumn{2}{c|}{AD-MA} & \multicolumn{2}{c}{AD+LV} \\
\cmidrule(lr){2-3} \cmidrule(lr){4-5} \cmidrule(lr){6-7}
 & MSE & MAE & MSE & MAE & MSE & MAE \\
\midrule
\multirow{1}{*}{Exchange} 
 & \textbf{0.180 $\pm$ 0.011} & \textbf{0.308 $\pm$ 0.009} & 0.197 $\pm$ 0.018 & 0.319 $\pm$ 0.014 & 0.183 $\pm$ 0.014 & 0.311 $\pm$ 0.010 \\

\multirow{1}{*}{ILI}
 & \textbf{1.652 $\pm$ 0.062} & {0.793 $\pm$ 0.026} & 1.735 $\pm$ 0.156 & 0.810 $\pm$ 0.058 & 1.666 $\pm$ 0.091 & \textbf{0.783 $\pm$ 0.031} \\

\multirow{1}{*}{ETTm2}
 & \textbf{0.250 $\pm$ 0.003} & \textbf{0.307 $\pm$ 0.003} & 0.250 $\pm$ 0.005 & 0.307 $\pm$ 0.004 & 0.252 $\pm$ 0.004 & 0.308 $\pm$ 0.002 \\

\multirow{1}{*}{Weather}
 & \textbf{0.217 $\pm$ 0.003} & \textbf{0.261 $\pm$ 0.004} & 0.220 $\pm$ 0.002 & 0.264 $\pm$ 0.004 & 0.219 $\pm$ 0.005 & 0.262 $\pm$ 0.005 \\
\bottomrule
\end{tabular}
\label{tab:ablation:denoiser}
\end{table}

\subsection{Does Diffusion Help? Frequency Decomposition Ablation Study}\label{appendix:ab_diffusion}
As analyzed in Appendix \ref{appendix: our_situation}, we introduce a temporal decomposition operation to strengthen the point forecasting capability of the backbone model, while the diffusion process primarily handles aleatoric uncertainty learning. To investigate whether probabilistic learning provides additional benefits to point forecasting, we conduct a comparative study with two deterministic models that exclude the diffusion component:

\begin{itemize}
    \item \textbf{NDB (Non-decomposed Backbone)}: The baseline backbone model without temporal decomposition operation.
    \item \textbf{DB (Decomposed Backbone)}: An enhanced architecture that incorporates (1) input decomposition that separates low-frequency noise components, and (2) an NS-adapter module for non-stationary feature learning.
\end{itemize}

As shown in Table \ref{tab:ab_backbone_diffusion}, the complete FALDA framework demonstrates superior performance compared to both deterministic variants (NDB and DB). These results suggest that: this decomposition operation effectively improves forecasting accuracy. Additionally, the diffusion component in FALDA provides additional performance gains beyond what can be achieved through decomposition alone. This empirical evidence confirms that probabilistic learning through diffusion modeling contributes positively to point forecasting performance when combined with our proposed decomposition architecture.

\begin{table}[h]
\centering
\caption{Ablation study on the benefits of probabilistic residual learning in forecasting performance.}
\label{tab:ab_backbone_diffusion}
\vspace{0.3cm}
\small
\setlength{\tabcolsep}{7pt}
\begin{tabular}{@{}l|cc|cc|cc|cc@{}}
\toprule
\multirow{2}{*}{Method} & \multicolumn{2}{c|}{Exchange} & \multicolumn{2}{c|}{ILI} & \multicolumn{2}{c|}{Electricity} & \multicolumn{2}{c@{}}{Traffic} \\
\cmidrule(lr){2-3}\cmidrule(lr){4-5}\cmidrule(lr){6-7}\cmidrule(l){8-9}
 & MAE & MSE & MAE & MSE & MAE & MSE & MAE & MSE \\
\midrule
Ours & \textbf{0.165} & \textbf{0.296} & \textbf{1.666} & \textbf{0.821} & \textbf{0.163} & \textbf{0.248} & \textbf{0.412} & \textbf{0.251} \\
NDB  & 0.194 & 0.315 & 1.786 & 0.826 & 0.165 & 0.249 & 0.439 & 0.276 \\
DB  & 0.194 & 0.316 & 1.791 & 0.828 & 0.165 & 0.250 & 0.439 & 0.276 \\
\bottomrule
\end{tabular}

\vspace{0.3cm}
\begin{minipage}{\linewidth}
\raggedright
\footnotesize
% \textsuperscript{1}NDB: Non-Decomposition Backbone; 

% \textsuperscript{2}DB: Decomposition + Backbone.
\end{minipage}
\end{table}

\subsection{Residual Framework Comparison with Identical Backbone} \label{appendix:exp:same backbone}
In this section, we evaluate the performance of TMDM, D3U, and FALDA with the NSformer backbone. The parameter configuration follows \cite{2024TMDM}, while the correlation results are reported in accordance with \cite{2025D3U}. Our experimental setup maintains consistency between the training and evaluation phases.
Table~\ref{tab:backbone_point} presents the point forecasting performance, measured by MAE and MSE. Meanwhile, Table~\ref{tab:backbone_probabilistic} summarizes the probabilistic forecasting performance using CRPS and CRPS$_{\text{sum}}$ metrics.
The experimental results demonstrate that FALDA achieves superior performance in both point and probabilistic forecasting tasks, validating the effectiveness of our proposed framework. By incorporating a time series decomposition mechanism to decouple distinct temporal components, our method facilitates more balanced learning of both epistemic and aleatoric uncertainties, thereby contributing to enhanced forecasting performance.
\begin{table}[h]
\centering
\footnotesize
\setlength{\tabcolsep}{3.5pt} % 减少列间距
\caption{Point forecasting performance comparison of different residual learning frameworks with NSformer backbone.}
\begin{tabular}{c|cc|cc|cc|cc|cc|cc}
\toprule
\multirow{2}{*}{Method} & \multicolumn{2}{c}{Exchange} & \multicolumn{2}{c}{ILI} & \multicolumn{2}{c}{ETTm2} & \multicolumn{2}{c}{Electricity} & \multicolumn{2}{c}{Traffic} & \multicolumn{2}{c}{Weather} \\
\cmidrule(lr){2-3}\cmidrule(lr){4-5}\cmidrule(lr){6-7}\cmidrule(lr){8-9}\cmidrule(lr){10-11}\cmidrule(lr){12-13}
 & MSE & MAE & MSE & MAE & MSE & MAE & MSE & MAE & MSE & MAE & MSE & MAE \\
\midrule
TMDM & 0.260 & 0.365 & 1.985 & 0.846 & 0.524 & 0.493 & 0.222 & 0.329 & 0.721 & 0.411 & 0.244 & 0.286 \\
D3U & 0.268 & 0.378 & 2.220 & 0.920 & \textbf{0.317} & 0.399 & 0.216 & 0.328 & 0.678 & 0.402 & \textbf{0.215} & \textbf{0.267} \\
Ours & \textbf{0.238} & \textbf{0.342} & \textbf{1.918} & \textbf{0.803} & {0.324} & \textbf{0.356} & \textbf{0.180} & \textbf{0.278} & \textbf{0.625} & \textbf{0.317} & 0.244 & 0.278 \\
\bottomrule
\end{tabular}
\label{tab:backbone_point}
\end{table}

\begin{table}[h]
\centering
\footnotesize
\setlength{\tabcolsep}{2.5pt} % 减少列间距
\begin{scriptsize} % 使用更小的字体
\caption{Probabilistic forecasting performance comparison of different residual learning frameworks with NSformer Backbone.}
\begin{tabular}{c|cc|cc|cc|cc|cc|cc}
\toprule
\multirow{2}{*}{Method} & \multicolumn{2}{c}{Exchange} & \multicolumn{2}{c}{ILI} & \multicolumn{2}{c}{ETTm2} & \multicolumn{2}{c}{Electricity} & \multicolumn{2}{c}{Traffic} & \multicolumn{2}{c}{Weather} \\
\cmidrule(lr){2-3}\cmidrule(lr){4-5}\cmidrule(lr){6-7}\cmidrule(lr){8-9}\cmidrule(lr){10-11}\cmidrule(lr){12-13}
 & CRPS & CRPS$_{\text{sum}}$  & CRPS & CRPS$_{\text{sum}}$ & CRPS & CRPS$_{\text{sum}}$ & CRPS & CRPS$_{\text{sum}}$ & CRPS & CRPS$_{\text{sum}}$ & CRPS & CRPS$_{\text{sum}}$ \\
\midrule
TMDM & 0.316 & 0.209 & 0.921 & 0.524 & 0.380 & 0.226 & 0.446 & \textbf{0.137} & 0.552 & \textbf{0.179} & 0.226 & 0.292 \\
D3U & 0.387 & 0.218 & 1.014 & 0.454 & \textbf{0.302} & \textbf{0.147} & 0.381 & 0.157 & 0.472 & 0.207 & \textbf{0.196} & \textbf{0.273} \\
Ours & \textbf{0.299} & \textbf{0.171} & \textbf{0.674} & \textbf{0.349} & 0.334 & 0.195 & \textbf{0.269} & {0.167} & \textbf{0.312} & {0.195} & 0.235 & 0.333 \\
\bottomrule
\end{tabular}
\label{tab:backbone_probabilistic}
\raggedright
\end{scriptsize}
\end{table}

\subsection{Training Strategy Experiments} \label{appendix:ab_training_strategy}
As defined in Eq.~\eqref{eq:loss_diffusion}, our loss function incorporates both a diffusion loss for denoiser optimization and a fine-tuning loss $\mathcal{L}_{\text{finetune}} = \|R - \text{sg}(\hat{R}^{(0)}_{\theta}(R^{(k')}, k', c))\|^2$ to simultaneously enhance the point estimate models. The hyperparameter $k'$ allows for flexible selection of diffusion steps during fine-tuning. To validate this choice, we perform ablation studies comparing models trained with and without fine-tuning, as well as models fine-tuned at different diffusion steps $k'$.
The experimental results presented in Figure~\ref{fig:ab_finetune} demonstrate that the fine-tuning operation provides consistent improvements over the no-fine-tuning setting. Additionally, our chosen configuration with $k'=100$ achieves competitive MSE and MAE performance among different step selections, suggesting the validity of our configuration as mentioned in Appendix~\ref{appendix:implementation}.
\begin{figure}[hbtp]
    \centering
    \includegraphics[width=1.0 \linewidth]{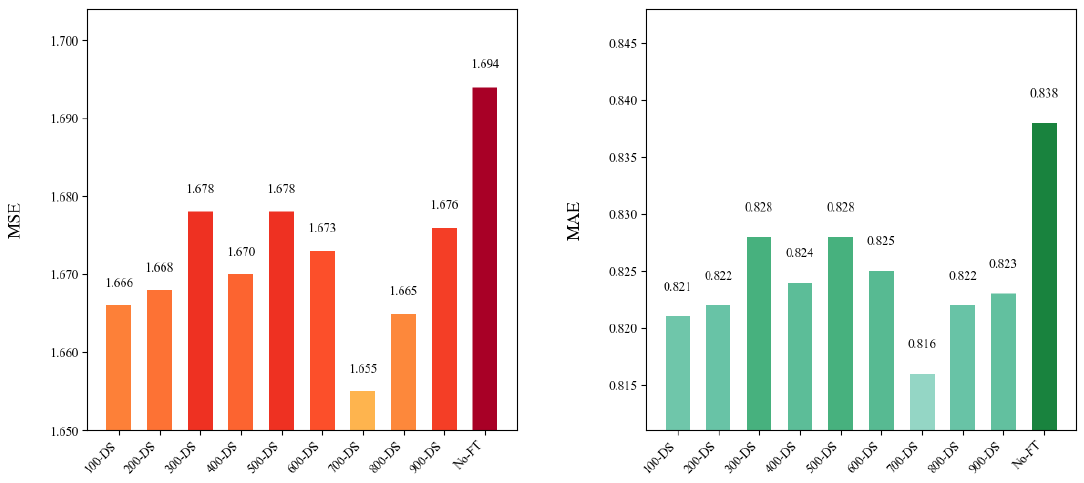}
    \caption{Evaluation of different training strategies on the ILI Dataset. The left subplot shows the MSE performance, while the right subplot shows the MAE performance. $k'$-DS: fine-tuning with diffusion step $k'$. No-FT: no fine-tuning.} 
    \label{fig:ab_finetune}
\end{figure}

\subsection{Training and Inference Efficiency} 
\label{appendix:train_infer_efficiency}
As discussed in Section \ref{sec:falda}, FALDA reconstructs the sample directly, rather than learning the noise at each diffusion step during the training phase, which reduces the learning complexity of the time series component. Additionally, our denoiser DEMA, which is designed as a lightweight MLP architecture, alleviates the training burden. During the inference process, we employ DDIM to accelerate inference. These design choices collectively contribute to the efficiency of FALDA, while maintaining its effectiveness. We conduct experiments to demonstrate its efficiency. As depicted in Figure~\ref{fig:speed_compare_exchange}, FALDA exhibits superior convergence properties compared to TMDM.
While TMDM requires approximately 30 epochs to converge on the Exchange dataset, FALDA achieves competitive performance after only 1 epoch. For fair comparison, we maintain identical training configurations with TMDM, including the learning rate ($1\times10^{-4}$) and optimization method (Adam optimizer). This accelerated convergence further underscores FALDA's computational advantages without compromising model performance.

\begin{figure}[hbtp]
    \centering
    \includegraphics[width=0.75\linewidth]{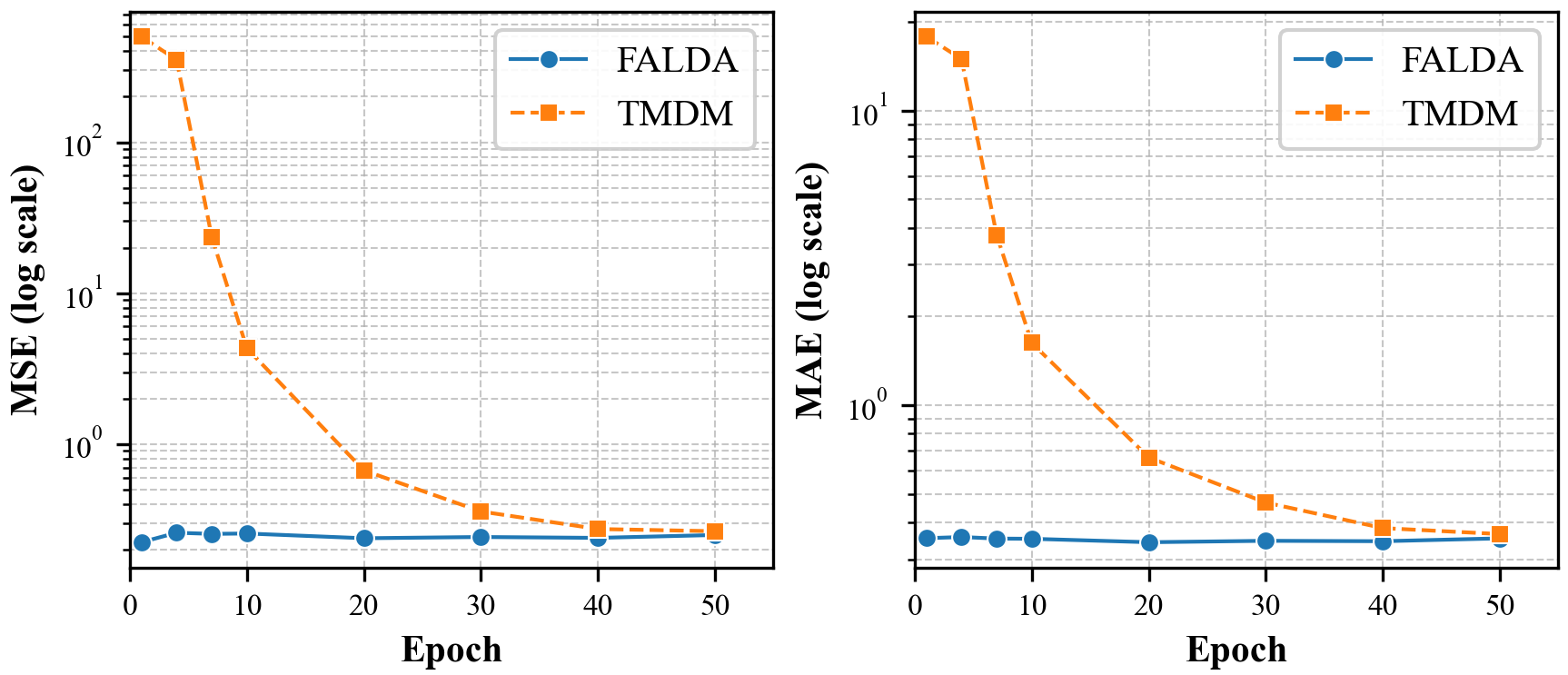}
    \caption{Training speed comparison between FALDA and TMDM on the Exchange dataset. The curves depict the evolution of metrics: MSE (left) and MAE (right) across training epochs.} 
    \label{fig:speed_compare_exchange}
\end{figure}

Building upon these convergence advantages, we implement a reduced early stopping patience for FALDA compared to TMDM during the training process, as detailed in Appendix~\ref{appendix:implementation}. During inference, we employ DDIM (Denoising Diffusion Implicit Models) to accelerate the reverse diffusion process, thereby significantly reducing both inference time and memory requirements.
Table~\ref{tab:appendix:train_test_speed} presents a comprehensive computational efficiency comparison between TMDM and FALDA across six benchmark datasets. The results demonstrate FALDA's consistent superiority in both training and inference phases. Specifically, FALDA achieves an inference speed improvement of up to 26.3$\times$ on the ETTm2 dataset, while attaining a training speed enhancement of up to 13.7$\times$ on the Exchange dataset. Furthermore, FALDA delivers a 2.1$\times$ training speed-up on the Electricity dataset (from 122.9 minutes to 58.3 minutes) and a 2.9$\times$ inference speed-up on the Traffic dataset (from 472.3 minutes to 160.7 minutes). These substantial improvements in computational efficiency not only validate FALDA's practical utility for real-world applications but also highlight its capability for processing high-dimensional datasets.

\begin{table}[h]
\centering
\caption{Comparison of training and inference times (minutes) between TMDM and FALDA \textsuperscript{1}.}
\label{tab:appendix:train_test_speed}
\vspace{0 cm}

\setlength{\tabcolsep}{8pt}
\begin{tabular}{@{}l|c|c|c|c@{}}
\toprule
\multirow{2}{*}{Dataset} & \multicolumn{2}{c}{\bfseries TMDM} & \multicolumn{2}{c}{\bfseries FALDA (Ours)} \\
\cmidrule(lr){2-3}\cmidrule(l){4-5}
 & \multicolumn{1}{c}{Training} & \multicolumn{1}{c|}{Inference \textsuperscript{2}} & \multicolumn{1}{c}{Training} & \multicolumn{1}{c}{Inference} \\
\midrule
ILI & 3.0 & 0.6 & \textbf{0.4} & \textbf{0.1} \\
Exchange Rate & 9.6 & 10.5 & \textbf{0.7} & \textbf{0.5} \\
ETTm2 & 36.6 & 194.4 & \textbf{3.4} & \textbf{7.4} \\
\addlinespace[2pt]
Weather & 69.8 & 119.1 & \textbf{6.3} & \textbf{13.5} \\
Electricity & 122.9 & 272.9 & \textbf{58.3} & \textbf{88.3} \\
Traffic & 97.0 & 472.3 & \textbf{83.6} & \textbf{160.7} \\
\bottomrule
\end{tabular}

\vspace{0 cm}
\begin{minipage}{0.9\linewidth}
\raggedright
\footnotesize
\textsuperscript{1} All experiments were conducted on an NVIDIA L20 GPU with 48GB memory.

\textsuperscript{2} Inference times were measured with 100 samples per test instance.
\end{minipage}
\end{table}

\subsection{Predictive Intervals result}
We present the result of PICE and QICE in Tabel \ref{tab:pice_qice}, which assesses the ability of the model to accurately cover the true values within its prediction intervals and the precision of the estimates for these intervals, respectively. See Appendix \ref{appendix:metrics} for specific definitions of PICE and QICE.

\begin{table}[htbp]
\centering
\caption{Comparison of PICP and QICE metrics.}
\label{tab:pice_qice}
\resizebox{0.8 \linewidth}{!}{ % 调整表格宽度为页面宽度的 90%
\begin{tabular}{lcccccc}
\toprule
\textbf{Dataset} & \textbf{Metric} & \textbf{TimeGrad} & \textbf{CSDI} & \textbf{TimeDiff} & \textbf{TMDM} & \textbf{Ours} \\
\midrule

\multirow{2}{*}{Exchange} 
 & PICP & 69.16 & 69.21 & 20.80 & 74.54 & \textbf{97.88 }\\
 & QICE & 5.32 & 5.49 & 13.34 & \textbf{4.38} & 5.49 \\
\midrule

\multirow{2}{*}{ILI} 
 & PICP & 74.29 & 76.18 & 3.69 & \textbf{87.83} & 77.49 \\
 & QICE & 7.86 & 7.75 & 15.50 & {6.74} & \textbf{4.42} \\
\midrule

\multirow{2}{*}{ETTm2} 
 & PICP & 71.62 & 71.78 & 13.16 & 73.20 & \textbf{84.08} \\
 & QICE & 5.37 & 5.07 & 14.22 & 3.75 & \textbf{2.71} \\
\midrule

% \multirow{2}{*}{Electricity} 
%  & PICP & 75.93 & 78.94 & 32.37 & 82.35 & 25.57 \\
%  & QICE & 5.34 & 4.74 & 12.74 & 3.81 & 13.57 \\
% \midrule

% \multirow{2}{*}{Traffic} 
%  & PICP & 82.28 & 83.51 & 9.11 & 86.83 & 17.57 \\
%  & QICE & 3.80 & 3.50 & 13.53 & 2.36 & 14.59 \\
% \midrule

\multirow{2}{*}{Weather} 
 & PICP & 62.79 & 62.71 & 21.60 & 72.97 & \textbf{80.75} \\
 & QICE & 7.36 & 5.14 & 13.18 & {3.87} & \textbf{3.58} \\
\bottomrule

\end{tabular}}
\end{table}

\section{Limitations} \label{appendix:limitations}
Although FALDA demonstrates competitive performance, it still has limitations. Specifically, the time series decomposition relies on removing the top $K_1$ and the last $K_2$ frequencies, where $K_1$ and $K_2$ are treated as hyperparameters. However, these hyperparameters play a critical role in influencing the overall framework performance, inadequate selection may lead to a decline in the backbone's performance. Future research could explore more systematic approaches to selecting $K_1$ and $K_2$, such as incorporating learnable mechanisms or other adaptive methods.

\section{Showcases}

\subsection{Case Study of FALDA and TMDM}
To demonstrate the superior probabilistic forecasting capability of FALDA, we present comparative visualizations of ground truth values and prediction results between FALDA and TMDM across four datasets in Figures~\ref{fig:total-ili}, \ref{fig:total-exchange}, \ref{fig:total-weather}, and~\ref{fig:total-ettm2}. The figures display the predicted median along with 50\% and 90\% distribution intervals, where the lower and upper percentiles are set at 2.5\% and 97.5\%, respectively.

Our experimental results demonstrate that FALDA achieves significantly better point forecasting accuracy compared to TMDM. Moreover, the residual learning approach combined ensures particularly accurate predictions for the first future time step, as clearly evidenced in Figure~\ref{fig:total-exchange}. While TMDM produces excessively wide prediction intervals for the initial future prediction, FALDA generates precise first-step forecasts with narrow confidence bounds that gradually widen over time. This behavior aligns well with real-world time series characteristics, where continuous variation is typically observed. Given complete historical information, especially the most recent observations, the immediate future time step should not deviate drastically from the last observed value. This forecasting model is particularly well suited to financial data, whose volatility typically increases over time, a phenomenon which corresponds well with our experimental results. The findings of this study indicate that FALDA not only provides more accurate forecasts, but also produces results that are more interpretable and better reflect the underlying data dynamics.

\subsection{Time Series Decomposition Visualization}
To illustrate our time series decomposition approach, Figures~\ref{fig:visualize_decomposition_1} and~\ref{fig:visualize_decomposition_2} visualize the distinct temporal components obtained through the decomposition method described in Equation~\eqref{eq:decomposition}. Figures~\ref{fig:visualize_decomposition_1} and~\ref{fig:visualize_decomposition_2} demonstrate the distinct decomposition characteristics of iTransformer and other backbones, respectively. The detailed implementation settings for these decomposition strategies are provided in Appendix~\ref{appendix:implementation}.

\subsection{Visualization of Key Components in FALDA}
To further showcase the predictive capabilities of FALDA, we visualize the outputs of its three key components across different datasets with the TS-Backbone set as Autoformer. Figures \ref{fig:vis_ili_multi-comp}, \ref{fig:vis_exchange_multi-comp}, \ref{fig:vis_ettm2_multi-comp}, and \ref{fig:vis_traffic_multi-comp} display both the model’s overall predictions and the decomposed predictions for the non-stationary term, the stationary term, and the noise term. For clarity, we sample 100 predictions to represent the probabilistic learning outcomes, with the width of the prediction intervals indicating the model’s quantified uncertainty.

Figure \ref{fig:vis_exchange_multi-comp} shows the progressive widening of prediction intervals over time, a pattern that aligns with the inherent characteristics typically observed in financial data. A comparative analysis of these figures reveals an inverse correlation between prediction accuracy and the width of the prediction intervals: more precise point estimates are associated with narrower uncertainty bounds, which is consistent with our residual learning paradigm.
These findings underscore FALDA’s capacity to effectively model aleatoric uncertainty across diverse datasets, while simultaneously preserving high predictive accuracy. This dual capability highlights the model's strength in both uncertainty quantification and forecasting precision.

% prediction interval showcase
\begin{figure}[htbp]
    \centering
    % top subgraph
    \begin{subfigure}[b]{1\textwidth}
        \includegraphics[width=\linewidth]{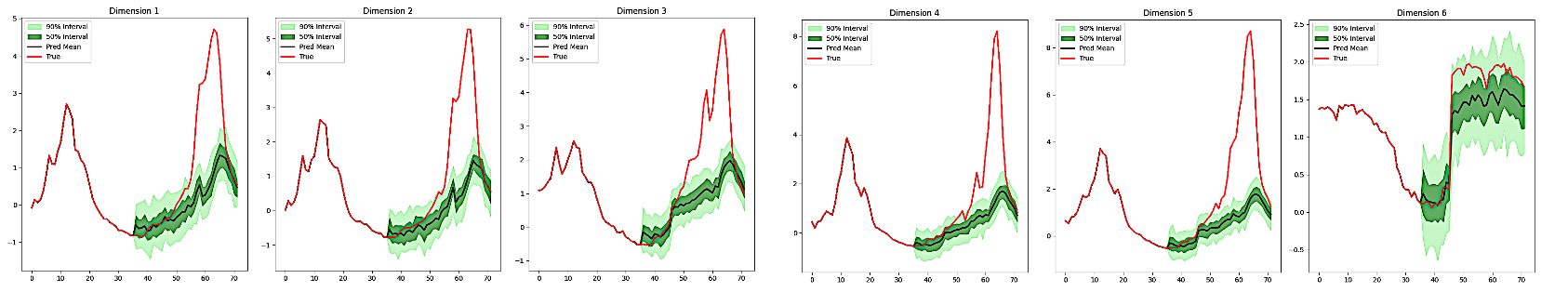} 
        \caption{TMDM}
        \label{fig:sub1-tmdm-ili}
    \end{subfigure}
    \vspace{0.05cm}
    
    % bottom subgraph
    \begin{subfigure}[b]{1\textwidth}
        \includegraphics[width=\linewidth]{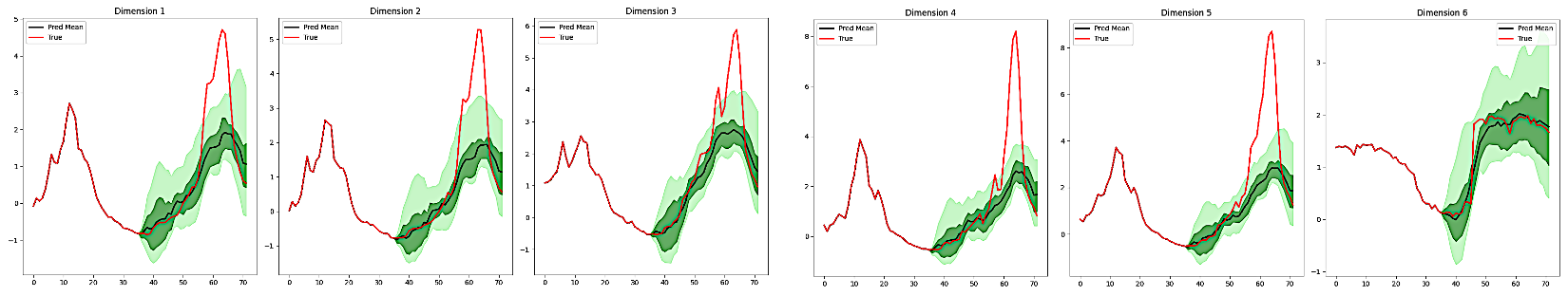} 
        \caption{FALDA}
        \label{fig:sub2-falda-ili}
    \end{subfigure}
    
    \caption{Comparison of prediction intervals for the ILI dataset ($T=36, S=36$). The red line indicates the ground truth, and the black line represents the predicted mean. Dark green shading denotes the 50\% prediction interval, and light green shading shows the 90\% prediction interval.}
    \label{fig:total-ili}
\end{figure}

\begin{figure}[htbp]
    \centering
    % top subgraph
    \begin{subfigure}[b]{1\textwidth}
        \includegraphics[width=\linewidth]{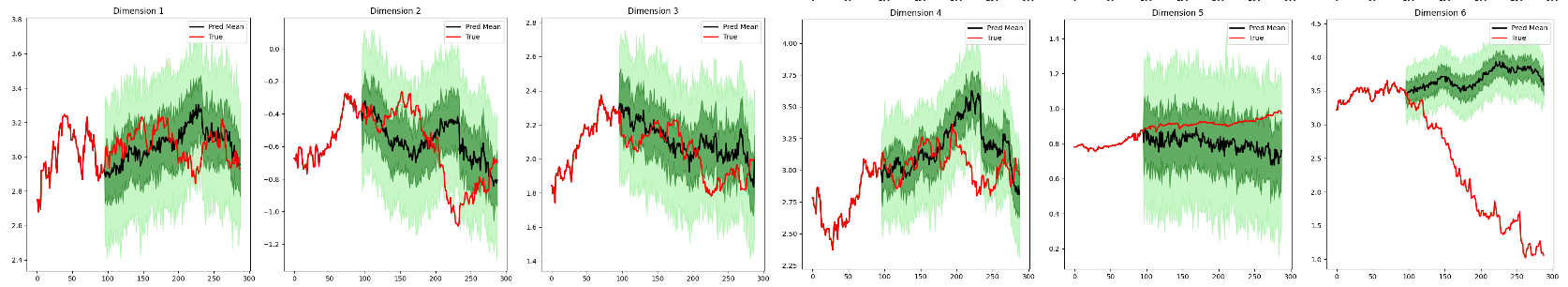} 
        \caption{TMDM}
        \label{fig:sub1-tmdm-exchange}
    \end{subfigure}
    
    \vspace{0.05cm}
    % bottom subgraph
    \begin{subfigure}[b]{1\textwidth}
        \includegraphics[width=\linewidth]{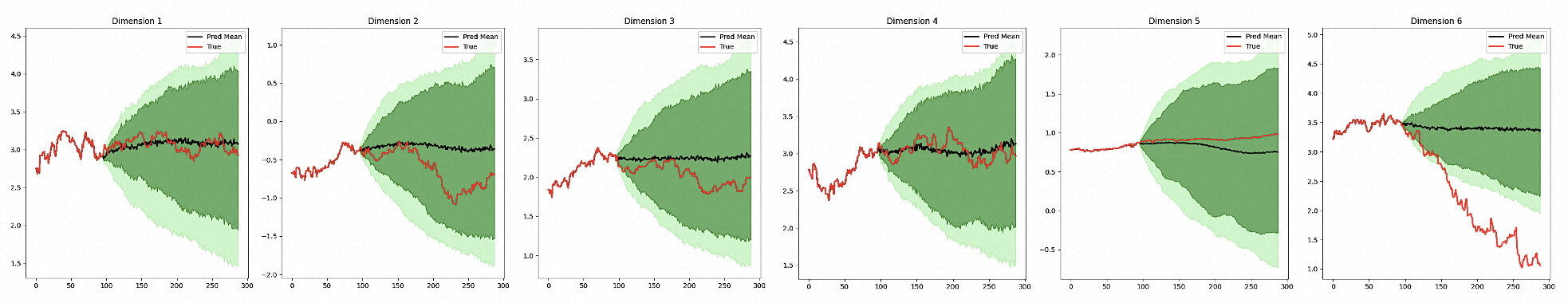} 
        \caption{FALDA}
        \label{fig:sub2-falda-exchange}
    \end{subfigure}
    \caption{Comparison of prediction intervals for the Exchange dataset ($T=96, S=192$). The red line indicates the ground truth, and the black line represents the predicted mean. Dark green shading denotes the 50\% prediction interval, and light green shading shows the 90\% prediction interval.}
    \label{fig:total-exchange}
\end{figure}

\begin{figure}[htbp]
    \centering
    % top subgraph
    \begin{subfigure}[b]{1\textwidth}
        \includegraphics[width=\linewidth]{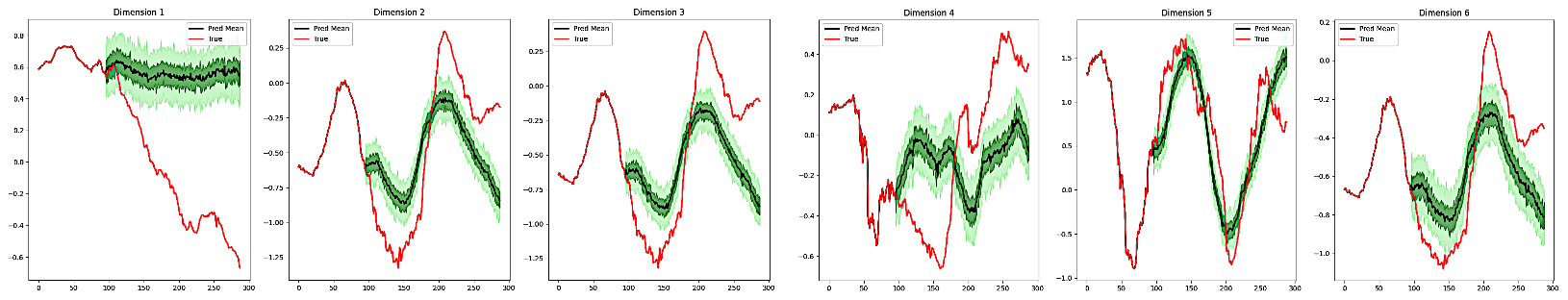} 
        \caption{TMDM}
        \label{fig:sub1-tmdm-weather}
    \end{subfigure}
    
    \vspace{0 cm}
    % bottom subgraph
    \begin{subfigure}[b]{1\textwidth}
        \includegraphics[width=\linewidth]{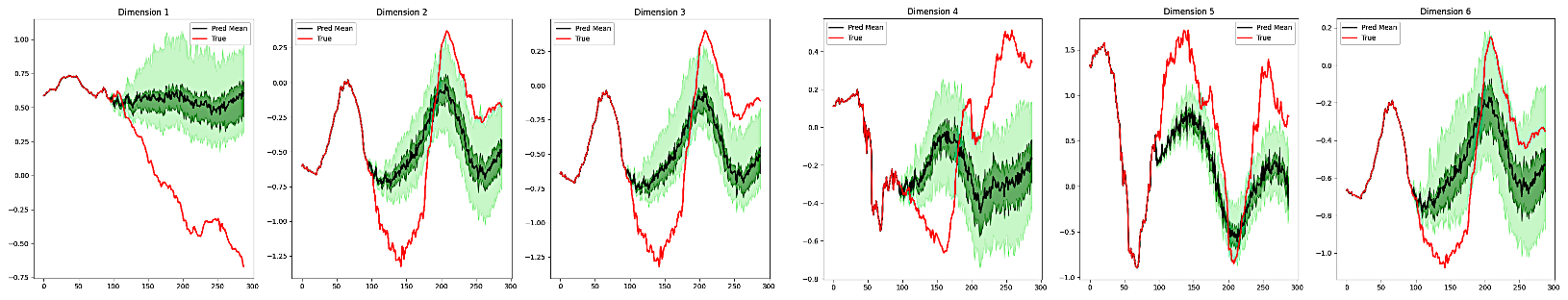} 
        \caption{FALDA}
        \label{fig:sub2-falda-weather}
    \end{subfigure}
    \caption{Comparison of prediction intervals for the Weather dataset ($T=96, S=192$). The red line indicates the ground truth, and the black line represents the predicted mean. Dark green shading denotes the 50\% prediction interval, and light green shading shows the 90\% prediction interval.}
    \label{fig:total-weather}
\end{figure}

\begin{figure}[htbp]
    \centering
    % top subgraph
    \begin{subfigure}[b]{1\textwidth}
        \includegraphics[width=\linewidth]{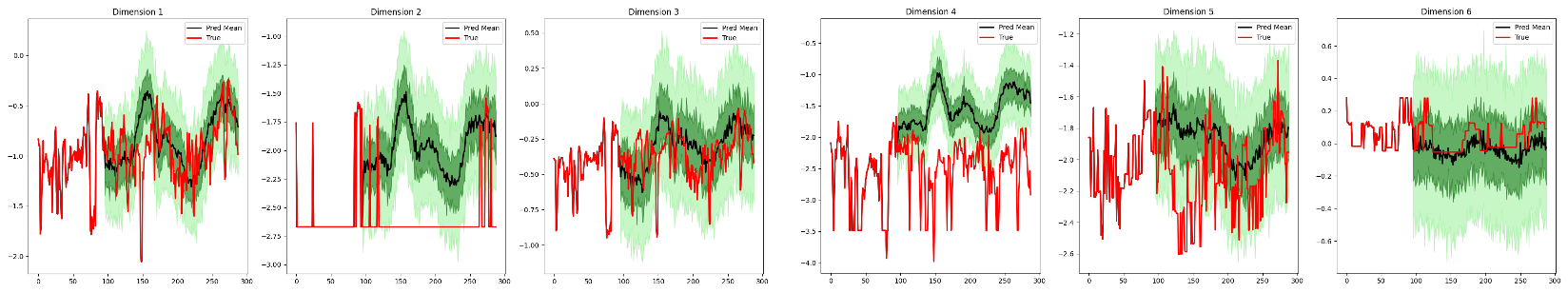} 
        \caption{TMDM}
        \label{fig:sub1-tmdm-ettm2}
    \end{subfigure}
    \vspace{0 cm}
    % bottom subgraph
    \begin{subfigure}[b]{1\textwidth}
        \includegraphics[width=\linewidth]{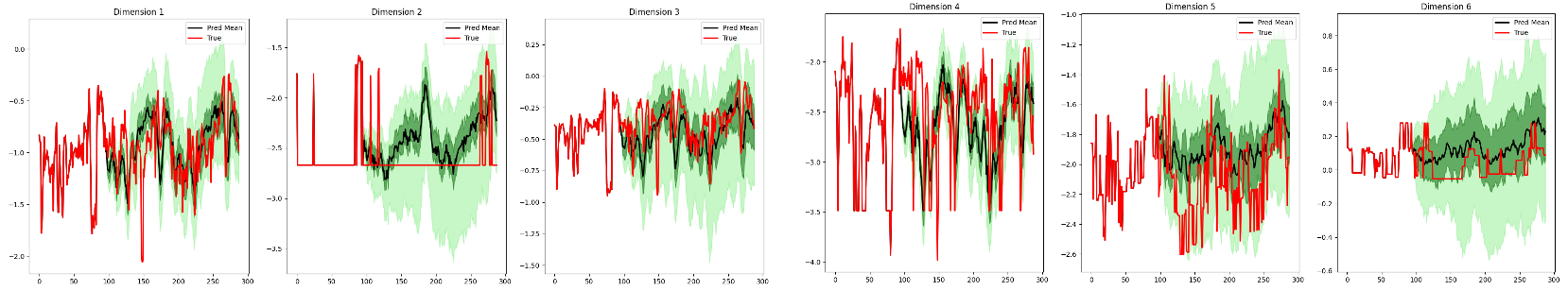} 
        \caption{FALDA}
        \label{fig:sub2-falda-ettm2}
    \end{subfigure}
    \caption{Comparison of prediction intervals for the ETTm2 dataset ($T=96, S=192$). The red line indicates the ground truth, and the black line represents the predicted mean. Dark green shading denotes the 50\% prediction interval, and light green shading shows the 90\% prediction interval.}
    \label{fig:total-ettm2}
\end{figure}

% decomposition figure
\begin{figure}
    \centering
    \includegraphics[width=1.0\linewidth]{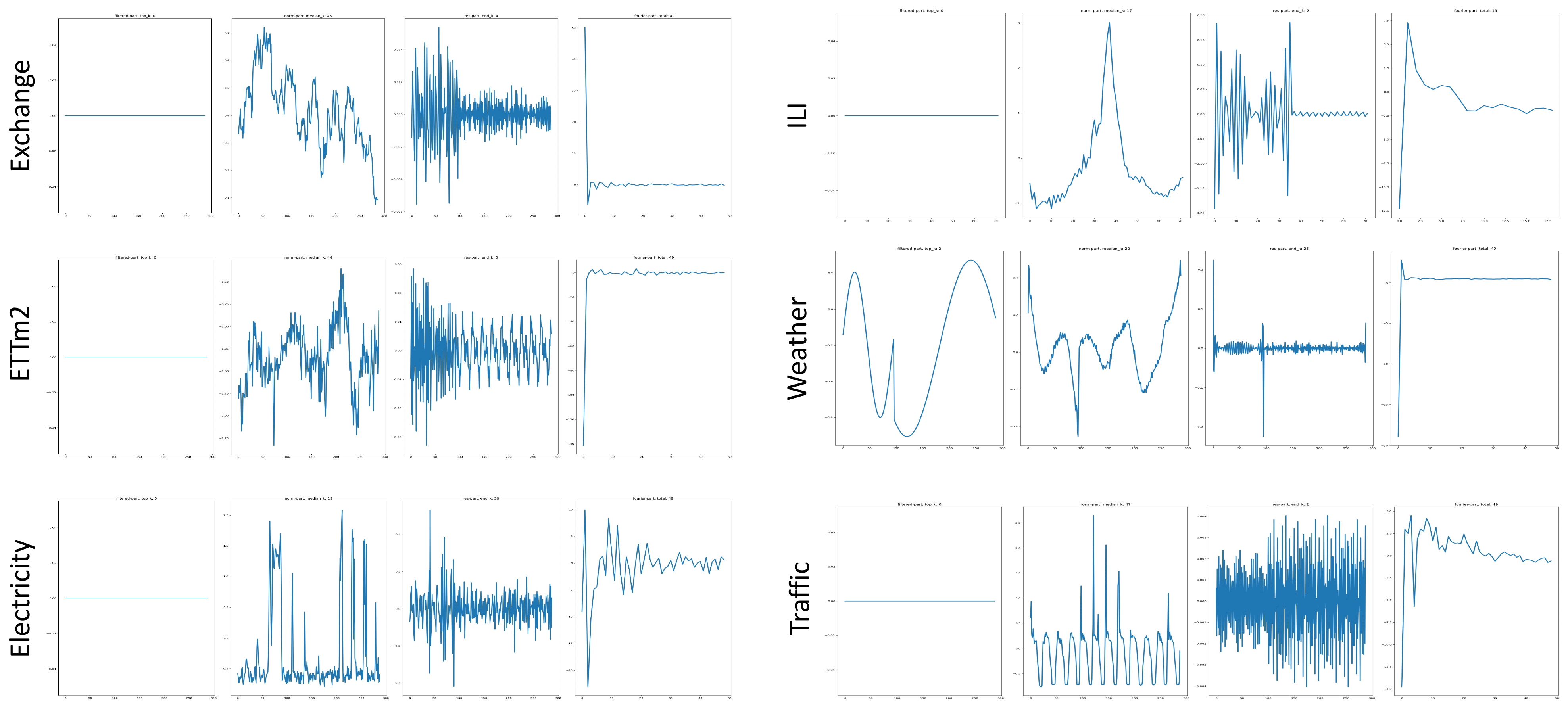}
    \caption{Time series decomposition strategy for the iTransformer backbone. From left to right, the subfigures present: (1) the non-stationary term, (2) the stationary term, (3) the noise term, and (4) the frequency-domain representation obtained via Fourier transform.}
    \label{fig:visualize_decomposition_1}
\end{figure}

\begin{figure}
    \centering
    \includegraphics[width=1\linewidth]{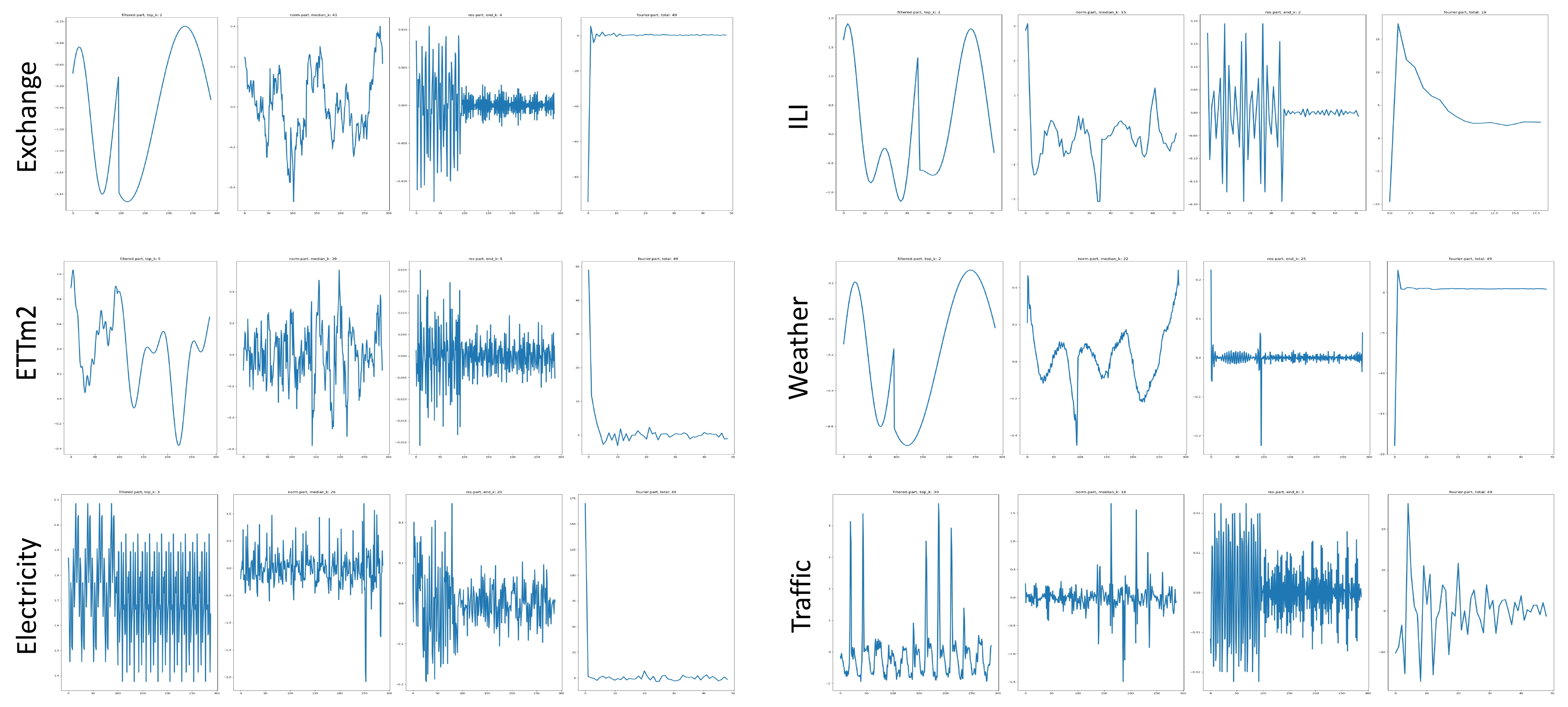}
    \caption{Time series decomposition strategy for other backbones. From left to right, the subfigures present: (1) the non-stationary term, (2) the stationary term, (3) the noise term, and (4) the frequency-domain representation obtained via the Fourier transform.}
    \label{fig:visualize_decomposition_2}
\end{figure}

% different components visualize
\begin{figure}[htbp]
    \centering
    \begin{subfigure}[b]{0.3\textwidth}
        \centering
        \includegraphics[width=\textwidth]{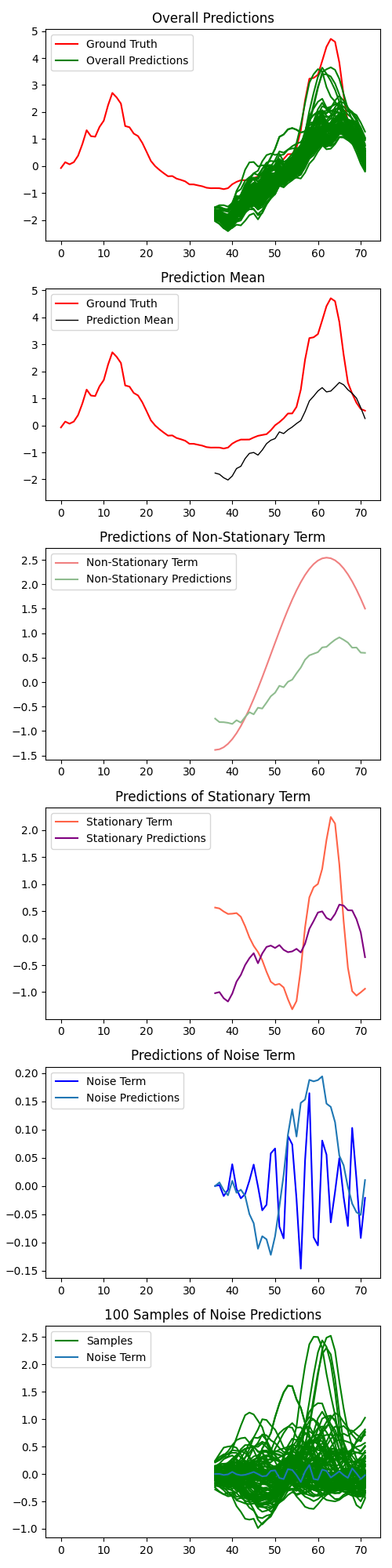} 
        \caption{ILI 1$^{\text{th}}$ dimension}
    \end{subfigure}
    \hspace{10pt}
    \begin{subfigure}[b]{0.3\textwidth}
        \centering
        \includegraphics[width=\textwidth]{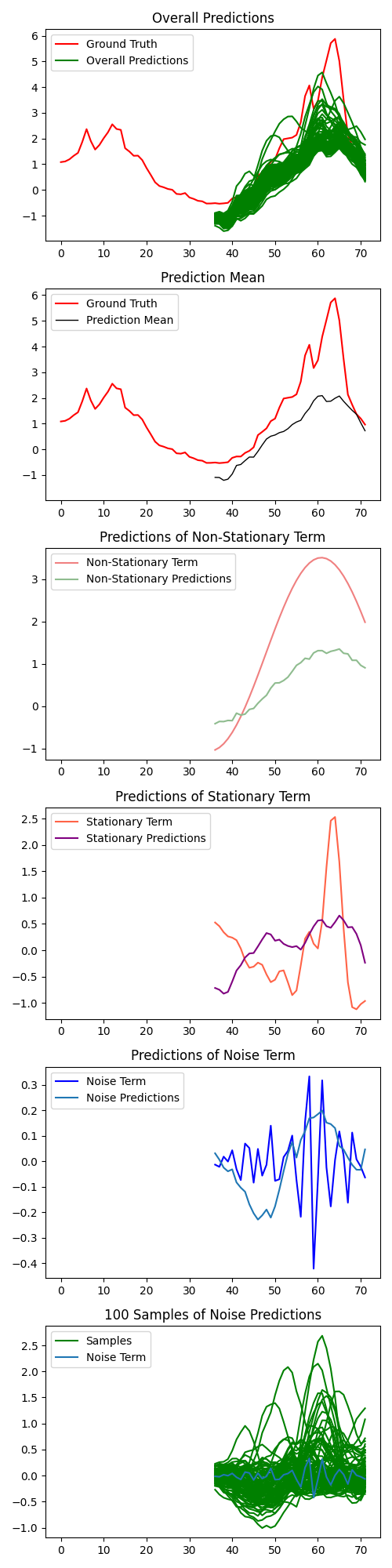} 
        \caption{ILI 3$^{\text{th}}$ dimension}
    \end{subfigure}
    \hspace{10pt}
    \begin{subfigure}[b]{0.3\textwidth}
        \centering
        \includegraphics[width=\textwidth]{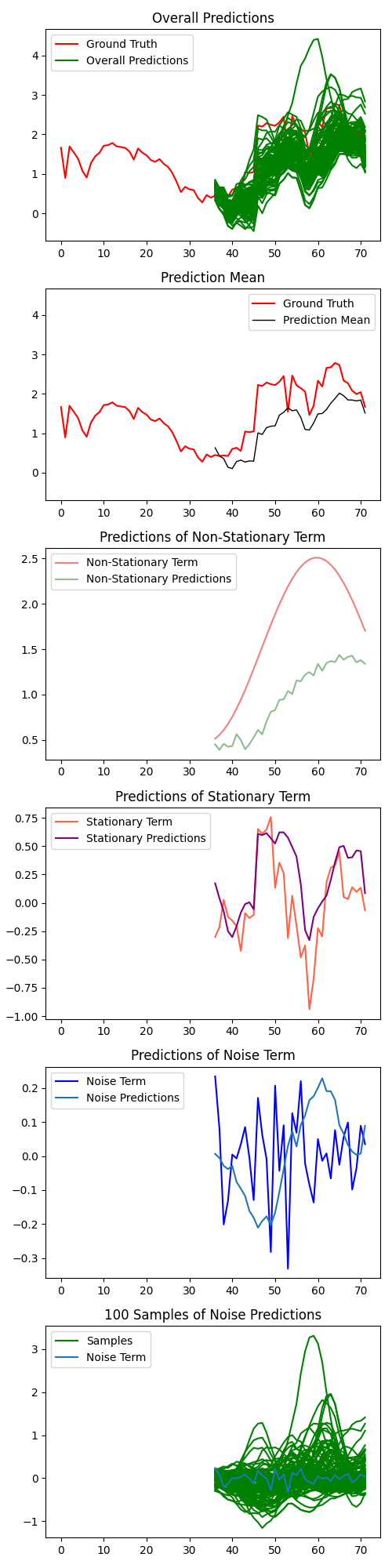} 
        \caption{ILI 7$^{\text{th}}$ dimension}
    \end{subfigure}
    \caption{Visualization of the prediction results from the different components (NS-Adapter, TS-Backbone, and DEMA) on the ILI dataset.}
    \label{fig:vis_ili_multi-comp}
\end{figure}

\begin{figure}
    \centering
    \includegraphics[width=0.9\linewidth]{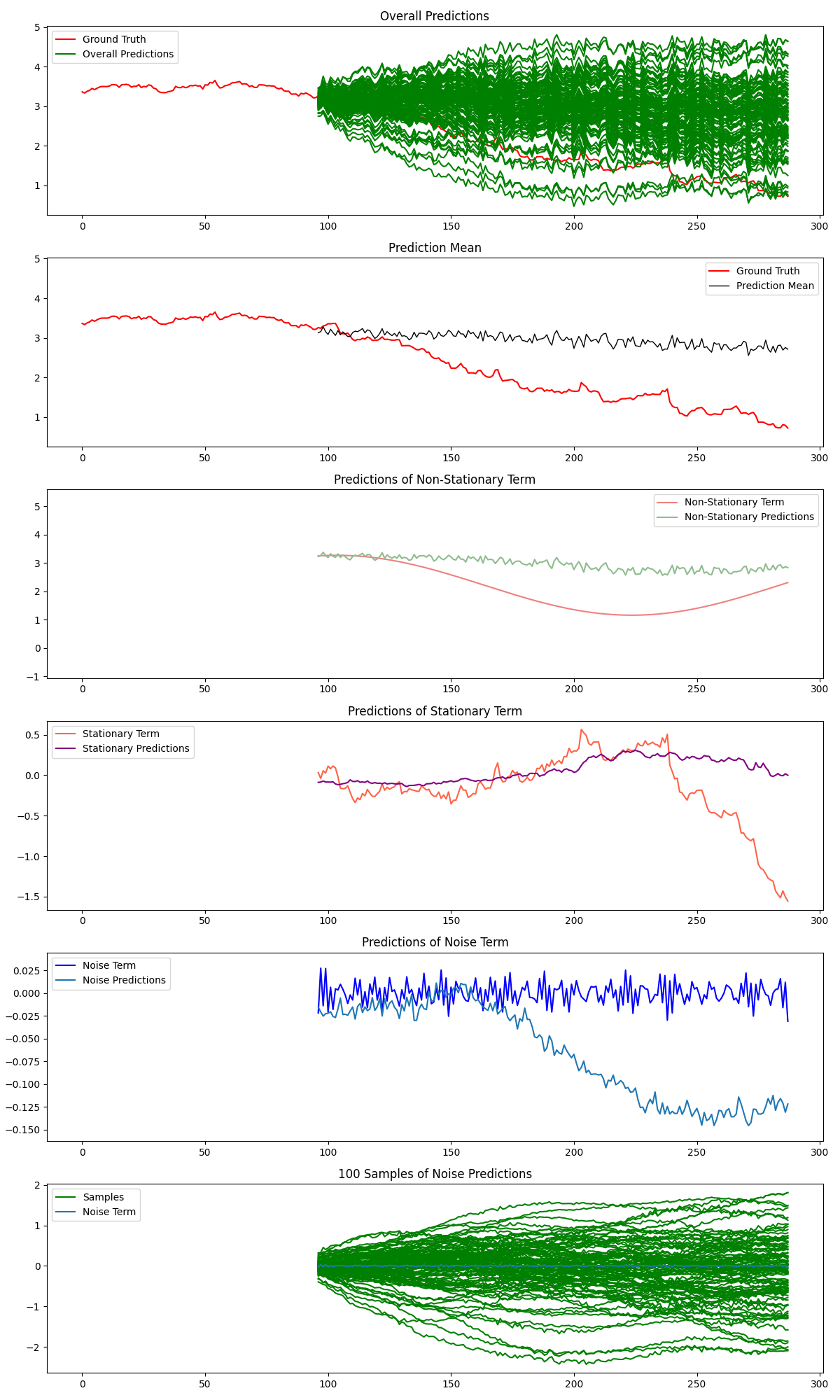}
    \caption{Visualization of the prediction results from the different components (NS-Adapter, TS-Backbone, and DEMA) on the Exchange dataset (6 $^{\text{th}}$ dimension).}
    \label{fig:vis_exchange_multi-comp}
\end{figure}

\begin{figure}
    \centering
    \includegraphics[width=0.9\linewidth]{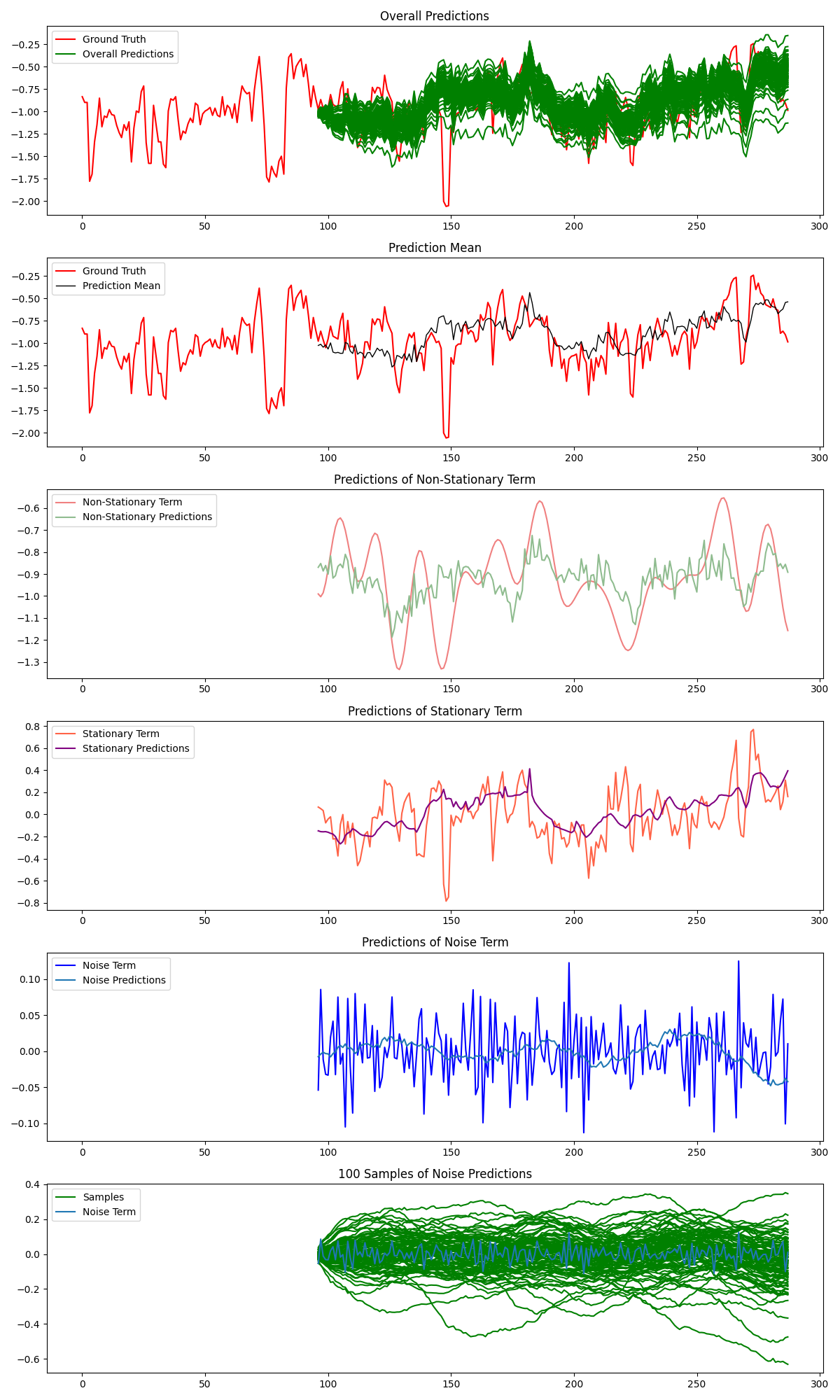}
    \caption{Visualization of the prediction results from the different components (NS-Adapter, TS-Backbone, and DEMA) on the ETTm2 dataset (1 $^{\text{th}}$ dimension).}
    \label{fig:vis_ettm2_multi-comp}
\end{figure}

\begin{figure}
    \centering
    \includegraphics[width=0.9\linewidth]{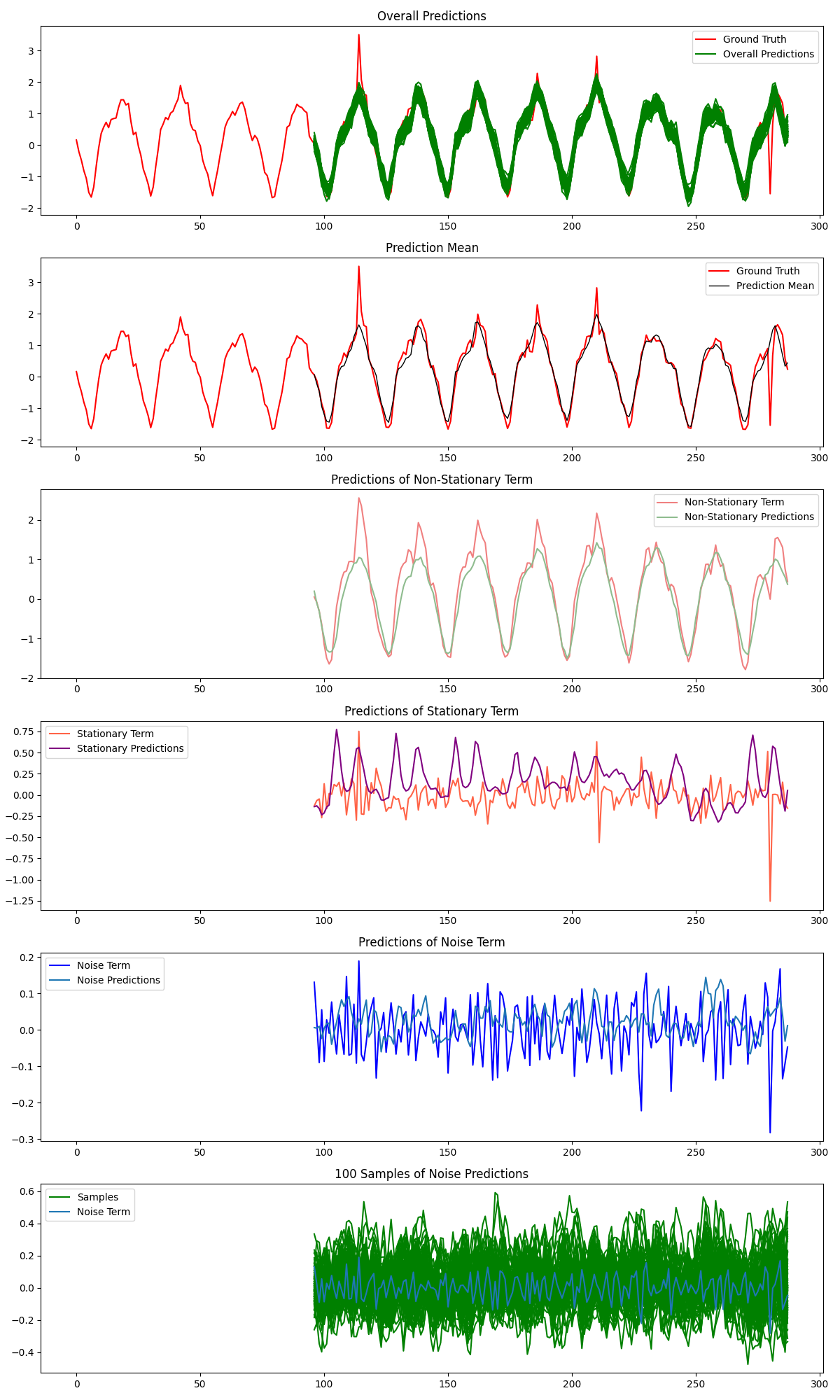}
    \caption{Visualization of the prediction results from the different components (NS-Adapter, TS-Backbone, and DEMA) on the Traffic dataset (800 $^{\text{th}}$ dimension).}
    \label{fig:vis_traffic_multi-comp}
\end{figure}

\clearpage

\end{document}